\documentclass[10pt]{article}  

\usepackage{palatino}
\usepackage{subfigure}
\usepackage{enumerate}
\usepackage{bm}
\usepackage{amsmath}
\usepackage{amsthm}
\usepackage{amssymb}
\usepackage{wrapfig}
\usepackage{array}
\usepackage{color}
\usepackage{algorithmic}
\usepackage{graphics,url}
\usepackage{natbib}
\usepackage{subfigure} 
\usepackage{graphicx}
\usepackage{setspace}

\definecolor{orange}{rgb}{0.0,1,0}
\definecolor{red}{rgb}{1,0,0}

\setlength{\topmargin}{-0.5in} 
\setlength{\oddsidemargin}{-0.0in}
\setlength{\evensidemargin}{-0.0in}
\setlength{\textwidth}{6.5in}
\setlength{\textheight}{9in}


\newcommand{\FNormS}[1]{\mbox{}\left\|#1\right\|_{\mathrm{F}}^2}

\newcommand{\TNorm }[1]{\mbox{}\left\|#1\right\|_2  }


\usepackage{subfigure,enumerate,bm,amsmath,amsthm,wrapfig,array,color}
\usepackage{graphics}

\newtheorem{theorem}{\bf Theorem}[]
\newtheorem{lemma}[theorem]{Lemma}
\newtheorem{definition}[theorem]{Definition}

\newcommand{\transp}{\ensuremath{^\text{\textsc{t}}}}
\newcommand{\trace}{\text{\rm Tr}}
\newcommand{\mat}[1]{{\ensuremath{\textsc{#1}}}}

\def\rank{\hbox{\rm rank}}

\def\matA{\mat{A}}
\def\matB{\mat{B}}
\def\matC{\mat{C}}

\def\matG{\mat{G}}
\def\matH{\mat{H}}
\def\matI{\mat{I}}

\def\matQ{\mat{Q}}
\def\matR{\mat{R}}
\def\matS{\mat{S}}
\def\matU{\mat{U}}
\def\matV{\mat{V}}

\def\matX{\mat{X}}
\def\matY{\mat{Y}}
\def\matZ{\mat{Z}}

\DeclareMathSymbol{\Prob}{\mathbin}{AMSb}{"50}
\newcommand\remove[1]{}

\def\math#1{$#1$}

\def\mand#1{$$#1$$}
\def\frac#1#2{{#1\over #2}}

\def\mld#1{\begin{equation}
#1
\end{equation}}
\def\eqar#1{\begin{eqnarray}
#1
\end{eqnarray}}
\def\eqan#1{\begin{eqnarray*}
#1
\end{eqnarray*}}

\DeclareMathSymbol{\R}{\mathbin}{AMSb}{"52}



\def\textchoose#1#2{(\begin{smallmatrix}#1\\#2\end{smallmatrix})}

\def\aa{{\mathbf a}}
\def\bb{{\mathbf b}}
\def\cc{{\mathbf c}}
\def\ee{{\mathbf e}}
\def\ff{{\mathbf f}}
\def\gg{{\mathbf g}}
\def\hh{{\mathbf h}}
\def\uu{{\mathbf u}}
\def\vv{{\mathbf v}}

\def\xx{{\mathbf x}}

\def\zz{{\mathbf z}}

\def\aa{{\mathbf a}}
\def\bb{{\mathbf b}}

\def\norm#1{{\|#1\|}}

\def\ceil#1{{\left\lceil\,#1\,\right\rceil}}
\def\r#1{{(\ref{#1})}}

\def\dotfil{\leaders\hbox to 1.5mm{.}\hfill}
\newcounter{rmnum}
\def\RN#1{\setcounter{rmnum}{#1}\uppercase\expandafter{\romannumeral\value{rmnum}}}
\def\rn#1{\setcounter{rmnum}{#1}\expandafter{\romannumeral\value{rmnum}}}

\providecommand\remove[1]{}
\DeclareMathSymbol{\Prob}{\mathbin}{AMSb}{"50}
\DeclareMathSymbol{\Exp}{\mathbin}{AMSb}{"45}

\newcommand{\christos}[1]{\textcolor{orange}{CHRISTOS: #1}}

\usepackage[normalem]{ulem} 
\usepackage{soul}

\title{Optimal Sparse Linear Auto-Encoders and Sparse PCA
}
\author{Malik Magdon-Ismail\\RPI CS Department, Troy, NY\\{\sf magdon@cs.rpi.edu}\and Christos Boutsidis\\Yahoo Labs, New York, NY\\{\sf
boutsidis@yahoo-inc.com}
}

\begin{document}
\begin{spacing}{1.25}

\maketitle

\begin{abstract}%
\noindent
Principal components 
analysis~(PCA) is the optimal linear auto-encoder of data, and it 
is often used to construct features.
Enforcing sparsity on the principal components can promote better 
generalization, while improving the interpretability of the features.
We study the problem of constructing optimal sparse
linear auto-encoders.
Two natural questions in such a setting are:
\begin{enumerate}[(i)]\itemsep0pt
\item Given a level of sparsity, what is the best approximation to
PCA that can be achieved? 
\item Are there low-order 
polynomial-time algorithms which can asymptotically achieve this optimal 
tradeoff between the sparsity and the approximation quality?
\end{enumerate}
In this work, we answer both questions by
giving efficient low-order polynomial-time algorithms for constructing 
asymptotically \emph{optimal} linear auto-encoders (in particular, sparse
features with near-PCA reconstruction error) and
demonstrate the performance of our algorithms on real data.
\end{abstract}

\section{Introduction}
\label{section:intro}

An auto-encoder transforms (encodes)
the data into a low dimensional
space (the feature space) and then lifts (decodes) it back to 
the original space. The auto-encoder reconstructs the data through a 
bottleneck, and if the reconstruction is close to the original 
data, then the encoder was able to preserve most of the information
using just a small number of features. 
Auto-encoders are important in machine learning because they 
perform information preserving dimension reduction.
The decoder only plays a
minor role in verifying that the encoder didn't lose much information.
It is the encoder that is important and constructs the 
(useful) low-dimensional feature vector. 
Nonlinear auto-encoders played an important role in 
auto-associative 
neural networks~\citep{CM88,BH88,BK88,O91}. 
A special case is the linear auto-encoder in which 
both the decoder and encoder are linear maps~\citep{O92}. 
Perhaps the most famous linear auto-encoder is 
principal components analysis (PCA), in particular because it is 
optimal: PCA is the linear auto-encoder that preserves the maximum
amount of information (given the dimensionality of the feature space).
We study general linear auto-encoders, and enforce sparsity on the
encoding linear map on the grounds that a sparse encoder is easier to 
interpret. A special case of a sparse linear encoder is sparse PCA.

More formally, 
the data matrix is \math{\matX\in\R^{n\times d}} (each row
\math{\xx_i\transp \in \R^{1 \times d}}
is a data point in \math{d} dimensions).
Our focus is the \emph{linear} auto-encoder, which,
for \math{k< d}, is a pair of linear mappings 
\eqan{
&&h:\R^d\mapsto\R^k\qquad \text{and}\\
&&g:\R^k\mapsto\R^d,
}
specified by an encoder 
matrix \math{\matH\in\R^{d\times k}} and a decoder matrix
\math{\matG\in\R^{k\times d}}.
For data point \math{\xx\in\R^d}, the encoded feature
is 
\mand{\zz=h(\xx)=\matH\transp\xx \in \R^k} 
and the reconstructed datum is
\mand{\hat\xx=g(\zz)=\matG\transp\zz \in \R^d.} 
Using \math{\hat\matX\in\R^{n\times d}} to denote the 
reconstructed data matrix, we have 
\mand{\hat\matX=\matX\matH\matG.}
The pair \math{(\matH,\matG)} are a good auto-encoder if 
\math{\hat\matX\approx\matX}, using the squared loss: 
\begin{definition}[Information Loss \math{\ell(\matH,\matX)}]
The information loss of linear encoder \math{\matH} is the minimum 
possible reconstruction error for \math{\matX} over all linear decoders 
\math{\matG}:
\mand{
\ell(\matH, \matX)=
\min_{\matG\in\R^{k\times d}}\FNormS{\matX-\matX\matH\matG}
=
\FNormS{\matX-\matX\matH(\matX\matH)^\dagger\matX}.
}
The last formula follows by doing a linear regression to obtain the optimal
\math{\matG}
\end{definition}
PCA is perhaps the most famous 
linear auto-encoder, because it
is optimal with respect to information loss. 
Since \math{\rank(\matX\matH\matG)\le k}, the information loss is bounded
by 
\mand{\ell(\matH,\matX)\ge\FNormS{\matX-\matX_k},} 
(for a matrix \math{\matA},
\math{\matA_k} is its best rank-\math{k} approximation). 
By the 
Eckart-Young theorem \math{\matX_k=\matX\matV_k\matV_k\transp}, where
\math{\matV_k \in \R^{d \times k}} 
is the matrix whose columns are the top-\math{k}
right singular vectors of \math{\matX}
(see, for example, e-Chapter 9 of \citet{malik199}). 
Thus, the \emph{optimal}
linear encoder is \math{\matH_{opt}=\matV_k}, with 
optimal decoder \math{\matG_{opt}=\matV_k\transp};
and, the corresponding top-\math{k} PCA-features are
\math{\matZ_{\text{pca}}=\matX\matV_k}. 

Since its early beginings in 
\citet{P1901}, PCA has become a  classic tool
for data analysis, visualization and feature extraction.
While PCA simplifies the data by concentrating
as much information as possible into a few components, those components
may not be easy to interpret. In many applications,
it is desirable to ``explain'' the
PCA-features using a small number of the original 
dimensions because 
the original variables have direct physical 
significance. For example, in biological applications, they may be 
genes, or in financial applications they may be assets. 
One seeks a tradeoff between the \emph{fidelity} of the
features (their ability to reconstruct the
data), and the \emph{interpretability} of the
features using a few original variables.
We would like the encoder \math{\matH} to be \emph{sparse}. 
Towards this end, we introduce a sparsity parameter
\math{r} 
and require that every column of 
\math{\matH} have at most \math{r} non-zero elements.
\emph{Every} feature in an \math{r}-sparse 
encoding can be ``explained'' using at most
\math{r} original features.
Such interpretable factors are known
as 
\emph{sparse principal components (SPCA)}.
We may now formally state the 
\emph{sparse linear encoder problem} that we consider in this work: 
\begin{center}
\fbox{
\begin{minipage}{0.85\textwidth}
\underline{{\bf Problem: Optimal \math{r}-sparse encoder (Sparse PCA)}}
\\[0.1in]
Given \math{\matX\in\R^{n\times d}}, \math{\varepsilon>0}
and $k < \rank(\matA)$, 
find, for the smallest possible \math{r},
an \math{r}-sparse encoder \math{\matH} for which 
\mand{
\ell(\matH,\matX)=\norm{\matX-\matX\matH(\matX\matH)^\dagger\matX}_{\mathrm{F}}^2
\le(1+\varepsilon)\norm{\matX-\matX_k}_{\mathrm{F}}^2.}
\end{minipage}
}
\end{center}
Note that we seek a \emph{relative-error approximation} to the
optimal loss.

\subsection{Notation}
 
Let \math{\rho\le\min\{n,d\}=\rank(\matX)} 
(typically \math{\rho=d}).
We use \math{\matA,\matB,\matC,\ldots} for
matrices and \math{\aa,\bb,\cc,\ldots} for vectors. The standard Euclidean
basis vectors are \math{\ee_1,\ee_2,\ldots} (the dimension will usually be 
clear from the context).

The
singular value decomposition (SVD)
allows us to write
\math{\matX=\matU\Sigma\matV\transp}, where
the columns of \math{\matU\in\R^{n\times \rho}} are the
\math{\rho} left singular vectors, the columns of 
\math{\matV\in\R^{d\times \rho}} are the
\math{\rho} right singular vectors, and \math{\Sigma\in\R^{\rho\times\rho}} 
is a diagonal matrix of positive singular values \math{\sigma_1\ge\cdots\ge
\sigma_\rho}; \math{\matU} and \math{\matV} are orthonormal,
so \math{\matU\transp\matU=\matV\transp\matV=\matI_\rho} \cite{GV96}. 
For integer \math{k}, we use 
\math{\matU_k\in\R^{n\times k}} (resp. \math{\matV_k\in\R^{d\times k}}) 
for the first \math{k} left (resp. right)
singular vectors, and \math{\Sigma_k\in\R^{k\times k}} is the 
diagonal matrix of corresponding top-\math{k} singular values.
We can view a matrix as a row of columns. So, 
\math{\matX=[\ff_1,\ldots,\ff_d]}, 
\math{\matU=[\uu_1,\ldots,\uu_\rho]},
\math{\matV=[\vv_1,\ldots,\vv_\rho]},
\math{\matU_k=[\uu_1,\ldots,\uu_k]} and 
\math{\matV_k=[\vv_1,\ldots,\vv_k]}. We use \math{\ff} for the columns of
\math{\matX}, the \emph{features}, and we reserve~\math{\xx_i} 
for the data points (rows of \math{\matX}), 
\math{\matX\transp=[\xx_1,\ldots,\xx_n]}. We say that matrix
\math{\matA=[\aa_1,\ldots,\aa_k]} is (\math{r_1,\ldots,r_k})-sparse if
\math{\norm{\aa_i}_0\le r_i}; moreover, if all 
\math{r_i} are equal to \math{r},
we say the matrix is \math{r}-sparse.

The Frobenius (Euclidean) norm of a matrix \math{\matA} is 
\math{\FNormS{\matA}=\sum_{ij}\matA_{ij}^2=\trace(\matA\transp\matA)=
\trace(\matA\matA\transp)}. The pseudo-inverse 
\math{\matA^\dagger} of \math{\matA} with 
SVD \math{\matU_\matA\Sigma_\matA\matV_\matA\transp} is  
\math{\matA^\dagger=\matV_\matA\Sigma_\matA^{-1}\matU_\matA\transp};
\math{\matA\matA^\dagger=\matU_\matA\matU_\matA\transp} is a symmetric 
projection operator. For matrices \math{\matA,\matB} with 
\math{\matA\transp\matB=\bm0}, a generalized Pythagoras theorem holds,
\math{\norm{\matA+\matB}_{\mathrm{F}}^2=\norm{\matA}_{\mathrm{F}}^2+\norm{\matB}_{\mathrm{F}}^2}.
\math{\TNorm{\matA}} is
the operator norm (top singular value) of \math{\matA}.

\subsection{Outline of Results}

\paragraph{Polynomial-time Algorithm for Near-Optimal \math{r}-Sparse Encoder.}
We give the first polynomial-time algorithms to construct 
an \math{r}-sparse linear encoder
\math{\matH} and a guarantee that the
information loss is close to the
optimal information loss of PCA
(see Theorem~\ref{theorem:SPCA}),
\mand{
\ell(\matH,\matX)\le(1+O(k/r))\norm{\matX-\matX_k}_\mathrm{F}^2.
}
Setting \math{r=O(k/\epsilon)} gives a \math{(1+\epsilon)}-approximation to
PCA.
Our algorithm is efficient,
running in \math{O(ndr+(n+d)r^2)} time (low-order polynomial).
We know of no other result that provides a guarantee on the quality of a 
top-\math{k} sparse linear encoder with respect to the 
optimal linear encoder (PCA). 
Our algorithm constructs all \math{k}
factors simultaneously, using a blackbox reduction to column subset selection
(see Theorem~\ref{theorem:blackbox}).

\paragraph{Lower Bound on Sparsity to Achieve Near-Optimal Performance.}
We give the first lower bound on the sparsity \math{r} that is required
to achieve a \math{(1+\epsilon)}-approximation to the information
loss of PCA (see Theorem~\ref{theorem:lower}). The lower bound shows that
sparsity \math{r=\Omega(k/\epsilon)} is required to guarantee
\math{(1+\epsilon)}-approximate information loss with respect to PCA, and 
hence that our algorithm is asymptotically optimal.

\paragraph{Iterative Algorithm for Linear Encoder.}
Our algorithm constructs all \math{k}
factors simultaneously, in a sense treating all the factors equally. One
cannot identify a ``top'' factor. Most existing 
sparse PCA algorithms first construct a top sparse PCA factor; then, every
next sparse factor must be orthogonal to all previous ones.
We develop a similar iterative algorithm by running our ``batch'' algorithm
repeatedly with \math{k=1}, each time extracting a provably
accurate sparse factor for a residual. We give a performance 
guarantee for the resulting \math{k} factors that are constructed by our
algorithm (see Theorem~\ref{theorem:mainIterative}). We show that
in each step, all factors constructed up to that point are provably 
accurate.
Ours is the first performance guarantee for any iterative scheme of this
type, that constructs sparse factors one by one.

Our information loss guarantee for the iterative algorithm is approximately
a \math{(1+O(\varepsilon\log k))}-factor worse than PCA (up to a small
additive error).
This bound is not as good as for the batch algorithm, but, in practice,
the iterative
algorithm is faster, constructs sparser factors and performs well.

\paragraph{Experiments.}
We show the experimental performance of our algorithms on 
standard benchmark data sets and compared with some standard 
benchmark sparse PCA algorithms. The experimental
 results indicate that
our algorithms perform as the theory predicts, and in all
cases produces factors which are comparable or better to the
standard benchmark algorithms.

\subsection{Discussion of Related Work}
PCA is the most popular linear auto-encoder, due to its optimality.
Nonlinear auto-encoders became prominent with 
auto-associative
neural networks~\citep{CM88,BH88,BK88,O91,O92}. We are unaware of
work addressing the more gereral ``sparse linear auto-encoder''. However, 
there is a lot of research on ``sparse PCA'', a special case of a sparse
linear auto-encoder.

\paragraph{Sparse Factors.}
The importance of sparse factors in dimensionality reduction 
was recognized in some early work: 
the \emph{varimax} criterion of \cite{K58} was used to 
rotate the factors and encourage sparsity, and this
has been used in multi-dimensional scaling approaches to dimension
reduction by \cite{S69,K64}.
One of the first attempts at sparse PCA used 
axis rotations and thresholding~\citep{CJ95}. Since then,
sophisticated
computational techniques have been developed. 
In general, these methods address finding
just one sparse principal component, and one can apply the
algorithm iteratively on the residual after projection to get
additional sparse principal components.

\paragraph{Minimizing Information Loss versus Maximizing Variance.}
The traditional formulation of sparse PCA is as a cardinality
constrained variance maximization problem:
maximize \math{\vv\transp\matA\vv} subject to
\math{\vv\transp\vv=1} and \math{\norm{\vv}_0\le r}, for $\matA \in \R^{n \times n}$
and $r < n$.
A straightforward reduction from {\sc max-clique} shows this problem to
be NP-hard (if \math{\matA} is the adjacency matrix of a graph, then
\math{\matA} has a clique of size \math{k} if and only if
\math{\vv\transp\matA\vv\ge(k-1)} for a \math{k}-sparse 
unit vector \math{\vv}, see~\citet{M2015}).
Sparse PCA is a special case of a generalized eigenvalue problem:
maximize \math{\vv\transp\matS\vv} subject to
\math{\vv\transp\matQ\vv=1} and \math{\norm{\vv}_0\le r}.
This generalized eigenvalue problem is known to be 
NP-hard \cite{MGWA08}, via a reduction from 
sparse regression which is
NP-hard \cite{N95,FKT14}.

This view of PCA 
as the projection which maximizes variance is due to a
historical restriction to symmetric auto-encoders
\math{\matH\matH^\dagger}  (so \math{\matG=\matH^\dagger}), see for example
\cite{O92}.
The PCA auto-encoder is symmetric
because \math{\matV_k^\dagger=\matV_k\transp}. Observe that
\eqan{
var(\matX)=\norm{\matX}_{\mathrm{F}}^2&=&
\norm{\matX(\matI-\matH\matH^\dagger)+\matX\matH\matH^\dagger}_{\mathrm{F}}^2\\
&=&
\norm{\matX-\matX\matH\matH^\dagger}_{\mathrm{F}}^2+\norm{\matX\matH\matH^\dagger}_{\mathrm{F}}^2,
}
where the last equality is from Pythagoras' theorem.
Minimizing the information loss  
\math{\norm{\matX-\matX\matH\matH^\dagger}_{\mathrm{F}}^2}
is \emph{equivalent} to maximizing 
\math{\norm{\matX\matH\matH^\dagger}_{\mathrm{F}}^2
=\trace(\matH^\dagger\matX\transp\matX\matH)}, 
the \emph{symmetric explained variance}. 
(A similar decomposition holds for the information loss of 
a general linear auto-encoder and 
the ``true'' explained variance).
The top-\math{k} principal components \math{\matV_k}
maximize the symmetric explained variance.
This view of PCA as the projection that captures the maximum 
symmetric explained variance has led to the historical approach to sparse PCA:
find a symmetric autoencoder that is sparse and captures the maximum
symmetric explained variance. 
The decomposition of the variance into a sum of information loss and
symmetric explained variance means that 
in an unconstrained setting,
minimizing information loss and maximizing the symmetric explained variance  
are both ways of 
encouraging \math{\matH} to be close to 
\math{\matV_k}. 
However, when \math{\matH} is constrained (for example
to be sparse), these optimization objectives
can produce \emph{very} different optimal solutions, and 
there are several reasons
to focus on minimizing information loss:
\begin{enumerate}[(i)]
\item
Maximizing variance corresponds to minimizing information loss for
symmetric decoders. For the general encoder however, variance has no intrinsic
value, but information loss directly captures the unsupervised
machine learning goal:
the decoder is secondary and what matters is that the
encoder produce a compact representation of the data and preserve
\emph{as much information as possible}. Constraints on the decoder 
(such requiring a symmetric auto-encoder) will translate
into a suboptimal encoder that loses more information than is neccessary.
We are after an encoder into a lower dimensional space that preserves as much
information in the data as possible. Hence,
to get the most informative sparse features, one should
directly minimize the information loss (placing no constraints on the
decoder). 

\item
An approximation algorithm for information loss can be converted
to an approximation algorithm for variance maximization.
\begin{theorem}
\label{theorem:var-approx}
If \math{\FNormS{\matX-\matX\matH\matH^\dagger}\le
(1+\varepsilon)\FNormS{\matX-\matX_k}}, then 
\mand{\FNormS{\matX\matH\matH^\dagger}\ge\FNormS{\matX_k}-
\varepsilon\FNormS{\matX-\matX_k}
\ge {\textstyle\left(1-\frac{\rho-k}{k}\varepsilon\right)}\norm{\matX_k}_\mathrm{F}^2.
}
\end{theorem}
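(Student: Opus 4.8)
The plan is to combine two exact orthogonal decompositions of $\FNormS{\matX}$ with the ordering of the singular values, so that the hypothesis feeds directly into both claimed bounds. The two decompositions I will use are both already available: the symmetric-decoder split $\FNormS{\matX}=\FNormS{\matX-\matX\matH\matH^\dagger}+\FNormS{\matX\matH\matH^\dagger}$ displayed just before the theorem, and the PCA split $\FNormS{\matX}=\FNormS{\matX_k}+\FNormS{\matX-\matX_k}$, which follows from $\matX_k=\matX\matV_k\matV_k\transp$ being an orthogonal projection of $\matX$ together with the generalized Pythagoras theorem. Both hold because $\matH\matH^\dagger$ (resp. $\matV_k\matV_k\transp$) is a symmetric idempotent, so that projecting each row and taking its residual produces orthogonal vectors; equivalently the Frobenius cross term $\trace((\matI-\matH\matH^\dagger)\matX\transp\matX\matH\matH^\dagger)$ vanishes by idempotency and cyclicity of the trace.

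For the first inequality I would rearrange the symmetric-decoder split into $\FNormS{\matX\matH\matH^\dagger}=\FNormS{\matX}-\FNormS{\matX-\matX\matH\matH^\dagger}$, insert the hypothesis $\FNormS{\matX-\matX\matH\matH^\dagger}\le(1+\varepsilon)\FNormS{\matX-\matX_k}$ to get $\FNormS{\matX\matH\matH^\dagger}\ge\FNormS{\matX}-(1+\varepsilon)\FNormS{\matX-\matX_k}$, and finally substitute $\FNormS{\matX}=\FNormS{\matX_k}+\FNormS{\matX-\matX_k}$. One copy of $\FNormS{\matX-\matX_k}$ cancels, leaving exactly $\FNormS{\matX\matH\matH^\dagger}\ge\FNormS{\matX_k}-\varepsilon\FNormS{\matX-\matX_k}$, which is the first claimed bound.

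The second inequality is then purely a statement about singular values: I need to bound the tail energy $\FNormS{\matX-\matX_k}$ by the head energy $\FNormS{\matX_k}$. Writing $\FNormS{\matX_k}=\sum_{i=1}^{k}\sigma_i^2$ and $\FNormS{\matX-\matX_k}=\sum_{i=k+1}^{\rho}\sigma_i^2$ and using $\sigma_1\ge\cdots\ge\sigma_\rho$, every tail term obeys $\sigma_i^2\le\sigma_k^2$ (there are $\rho-k$ of them) while $\FNormS{\matX_k}\ge k\sigma_k^2$; hence $\FNormS{\matX-\matX_k}\le(\rho-k)\sigma_k^2\le\frac{\rho-k}{k}\FNormS{\matX_k}$. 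Substituting this into the first bound gives $\FNormS{\matX\matH\matH^\dagger}\ge\left(1-\frac{\rho-k}{k}\varepsilon\right)\FNormS{\matX_k}$, as required. I do not expect a genuine obstacle here: the argument is the assembly of two identities and one elementary tail estimate. The only point that merits care is confirming that both Pythagorean splits are legitimate, i.e. that the cross terms vanish for the projections $\matH\matH^\dagger$ and $\matV_k\matV_k\transp$; once that is granted, the chain of (in)equalities is forced.
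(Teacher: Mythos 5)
Your proposal is correct and follows essentially the same route as the paper's own proof: the same Pythagorean rearrangement yields the first inequality, and the second follows from the tail bound $\FNormS{\matX-\matX_k}\le\frac{\rho-k}{k}\FNormS{\matX_k}$, which the paper simply asserts and you prove via the singular-value ordering.
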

\begin{proof}
Since \math{\FNormS{\matX-\matX\matH\matH^\dagger}=
\FNormS{\matX}-\FNormS{\matX\matH\matH^\dagger}\le(1+\varepsilon)
\FNormS{\matX-\matX_k}}, we have
\mand{
\FNormS{\matX\matH\matH^\dagger}
\ge\FNormS{\matX}-(1+\varepsilon)
\FNormS{\matX-\matX_k}=\FNormS{\matX_k}-\varepsilon\FNormS{\matX-\matX_k}.
}
The second inequality follows from the bound  
\math{\FNormS{\matX-\matX_k}\le\FNormS{\matX_k}\cdot(\rho-k)/k}.
\end{proof}
Thus, a relative error approximation for reconstruction
error gives a relative error approximation for explained variance.
However, a decoder which gives a relative
error approximation for symmetric explained variance
does not immediately give a relative error approximation for information 
loss.

\item 
Explained variance 
is not well defined for general encoders, whereas information loss
is well defined.
As \citet{ZHT06} points out,
one has to be careful when defining the explained variance for
general encoders.
The interpretation of 
\math{\trace(\matH^\dagger\matX\transp\matX\matH)}
as an explained variance 
relies on two important properties:
the columns in \math{\matH} 
are orthonormal (``independent'') directions and they are 
decorrelated from each other,
that is
\math{\matH\transp\matH=\matI_k} and \math{\matH\transp\matX\transp
\matX\matH} is diagonal. The right singular vectors
are the unique factors having these two properties.
Therefore, when one introduces a cardinality constraint, one has to give up
one or both of these properties, and typically one relaxes the 
decorrelation requirement. Now, as  
\citet{ZHT06} points out, when factors are correlated, the variance is
not straightforward to define, and 
\math{\trace(\matH^\dagger\matX\transp\matX\matH)} is an optipistic
estimate of explainied variance. 
\citet{ZHT06} computes 
a ``variance after decorrelation'' to quantify the quality of
sparse PCA. 
Their solution is not completely satisfactory since the
order in which factors are sequentially
decorrelated can change the explained variance.
Our solution is simple: use the information loss which is not only
the natural metric one is interested in, but is always well defined.
\end{enumerate}

\paragraph{Heuristics for Sparse PCA via Maximizing Variance.}
Despite the complications with with interpreting the explained variance
with general encoders which produce correlated factors, all the
heuristics we are aware of has focussed on maximizing the variance.
With a sparsity constraint, the problem 
becomes combinatorial and 
the exhaustive algorithm 
requires \math{O\bigl(dr^2 \textchoose{d}{r}\bigr)} computation.
This exponential running time can be
improved to \math{O(d^{q+1})} for a
rank-\math{q} perturbation of the identity~\citep{APK11}. 
None of these exhaustive 
algorithms are practical for high-dimensional data.
Several heuristics exist.
\citet{TJU03} and \citet{ZHT06} take an \math{L_1} penalization view. 
DSPCA (direct sparse PCA)  
\cite{AEJL07} also uses an \math{L_1} sparsifier but
solves a relaxed convex
semidefinite program which is further refined in
\citet{ABE08} where they also
tractable sufficient
condition for testing optimality.
The simplest algorithms  
use greedy forward
and backward subset selection. For example, \cite{MWA06a} develop
a greedy branch and bound algorithm based on spectral bounds with 
\math{O(d^3)} running time for forward selection 
and \math{O(d^4)} running time for backward selection.
An alternative view of the problem is as a sparse matrix reconstruction
problem;  for example \cite{SH08} obtain sparse principal components
using regularized low-rank matrix approximation.

\paragraph{Theoretical guarantees.}
The heuristics
are quite mature, but few theoretical 
guarantees are known. 
We are not aware of any polynomial time algorithms with guarantees 
on optimality. In \cite{APD2014}, the sparse PCA problem is considered
with an additional non-negativity constraint; and the authors 
give an algorithm which takes as input a parameter
\math{k}, has running time \math{O(d^{k+1}\log d+d^kr^3)} 
and constructs a sparse solution that is a
\math{(1-\frac{n}{r}\norm{\matS-\matS_k}_2/\norm{\matS}_2)}-factor from
optimal. The running time is not practical when \math{k} is large
and the approximation guarantee is only non-trivial when the 
spectrum of \math{\matS} is rapidly decaying.
Further, the approximation guarantee only applies to constructing 
the first sparse PCA and it is not clear how this result can be extended
if the algorithm is applied iteratively.

To our knowledge,
there are no known guarantees for top-\math{k} sparse factors
with respect to PCA 
Within the \emph{more} general setting of
sparse linear auto-encoders,
we solve two open problems:
(\rn{1}) We determine the best approximation
\emph{guarantee} with respect to the optimal linear
encoder (PCA) that can be achieved using a linear encoder with
sparsity \math{r}; (\rn{2}) We give low order polynomial algorithms that
achieve an approximation guarantee with respect to PCA using a sparsity
that is within a factor of two of the minimum required sparsity. 
Our algorithms apply to constructing
\math{k} sparse
linear
features
with performance comparable to top-\math{k} PCA.

\section{Optimal Sparse Linear Encoder}

We show a black-box reduction of sparse linear encoding to 
column subset selection, and so
it is worthwhile to first discuss background in column subset
selection. 
We then use column subset selection 
algorithms to construct provably accurate sparse auto-encoders. 
Finally, we 
modify our main algorithm so that it can select features iteratively and prove
a bound for this iterative setting.

\subsection{Column Subset Selection Problem (CSSP)}

For \math{\matX=[\ff_1,\ldots,\ff_d]},  we let 
\math{\matC=[\ff_{i_1},\ff_{i_2}\ldots,\ff_{i_r}]} denote a matrix formed using 
\math{r}
columns ``sampled'' from \math{\matX}, where  
\math{1\le i_1< i_2\cdots< i_r\le d} are distinct column indices. 
Column sampling is a linear 
operation, and we can use a matrix 
\math{\Omega \in \R^{d \times r}} to perform the column sampling,
\mand{\matC=\matX\Omega,
\qquad\text{where}\qquad
\Omega=[\ee_{i_1},\ee_{i_2}\ldots,\ee_{i_r}],
}
and \math{\ee_i} are the standard basis vectors in \math{\R^{d}} 
(post-multiplying \math{\matX}
by \math{\ee_i} ``samples'' the  \math{i}th column of \math{\matX}).
The columns of \math{\matC} span a subspace in the range of 
\math{\matX} (which is the span of the columns of \math{\matX}).
Any column sampling matrix can be used to construct an
\math{r}-sparse matrix.
\begin{lemma}\label{lem:r-sparse}
Let \math{\Omega=[\ee_{i_1},\ee_{i_2}\ldots,\ee_{i_r}]  \in \R^{d \times r} } and let 
\math{\matA\in\R^{r\times k}} be any matrix. Then 
\math{\Omega\matA \in \R^{d \times k}} is \math{r}-sparse, i.e.,
each column of $\Omega\matA$ has at most $r$ non-zero entries. 
\end{lemma}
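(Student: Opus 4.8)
The plan is to exploit the fact that \math{\Omega} is nothing more than a column-selection (equivalently, support-restriction) operator whose only nonzero rows are those indexed by \math{i_1,\ldots,i_r}. Since left-multiplication by \math{\Omega} acts columnwise, it suffices to control the support of a single column \math{\Omega\aa} for an arbitrary \math{\aa\in\R^r}, and then apply this conclusion to each of the \math{k} columns of \math{\matA} in turn.

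First I would write the \math{j}-th column of the product as \math{\Omega\aa_j}, where \math{\aa_j\in\R^r} denotes the \math{j}-th column of \math{\matA}, and expand it against the given description of \math{\Omega}. Because the \math{l}-th column of \math{\Omega} is the standard basis vector \math{\ee_{i_l}}, linearity gives \math{\Omega\aa_j=\sum_{l=1}^r(\aa_j)_l\,\ee_{i_l}}. This exhibits \math{\Omega\aa_j} as a linear combination of the \math{r} vectors \math{\ee_{i_1},\ldots,\ee_{i_r}}, so its set of nonzero coordinates is contained in \math{\{i_1,\ldots,i_r\}}. The second, and final, step is to count: the index set \math{\{i_1,\ldots,i_r\}} has cardinality exactly \math{r} because the indices are distinct (\math{1\le i_1<\cdots<i_r\le d}), hence \math{\norm{\Omega\aa_j}_0\le r} for every \math{j}. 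As \math{j} ranges over \math{1,\ldots,k}, this says each of the \math{k} columns of \math{\Omega\matA} has at most \math{r} nonzero entries, which is precisely the assertion that \math{\Omega\matA} is \math{r}-sparse.

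There is essentially no obstacle here: the statement is immediate once one identifies the support of the columns of \math{\Omega}. Indeed, a one-line alternative I would note as a sanity check is that \math{\Omega} has nonzero entries only in rows \math{i_1,\ldots,i_r}, so the same is true of \math{\Omega\matA} for \emph{any} \math{\matA}, giving the per-column bound for free. The only point requiring any care is the use of distinctness of the \math{i_l} to guarantee the support size is at most \math{r}; since the hypothesis already fixes \math{i_1<\cdots<i_r}, even this is automatic.
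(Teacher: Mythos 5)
Your proof is correct and matches the paper's argument in substance: the paper observes that $\Omega$ has only $r$ non-zero rows, so each column $\Omega\aa_j$ has at most $r$ non-zero entries, which is exactly the ``one-line alternative'' you note and is equivalent to your expansion of $\Omega\aa_j$ as a combination of $\ee_{i_1},\ldots,\ee_{i_r}$. No meaningful difference in approach.
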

\begin{proof}
Let \math{\matA=[\aa_1,\ldots,\aa_k]} and 
consider \math{\Omega\aa_j} which is column 
\math{j} of \math{\Omega\matA}. A zero-row
of \math{\Omega} results in the corresponding entry in \math{\Omega\aa_j} 
being zero. Since \math{\Omega} has \math{r} non-zero rows, there are
at most \math{r} non-zero entries in \math{\Omega\aa_j}.
\end{proof}
Given columns \math{\matC}, we define \math{\matX_{\matC}=\matC\matC^\dagger\matX} 
to be the matrix obtained by projecting all the columns of \math{\matX} onto
the subspace spanned by \math{\matC}. For any matrix \math{\hat\matX}
whose columns 
are in the span of \math{\matC}, 
\math{\norm{\matX-\matX_{\matC}}_{\mathrm{F}}^2\le \norm{\matX-\hat\matX}_{\mathrm{F}}^2}.
Let \math{\matX_{\matC,k} \in \R^{n \times d}} be the optimal rank-\math{k} approximation to
\math{\matX_\matC} obtained via the SVD of \math{\matX_\matC}. 
\begin{lemma}[See, for example, \cite{malik195}]\label{lem:XCk}
\math{\matX_{\matC,k}} is a rank-\math{k} matrix whose columns are in the
span of \math{\matC}. 
Let \math{\hat\matX} be any rank-\math{k} matrix whose columns are in the 
span of \math{\matC}. Then,  
\math{\norm{\matX-\matX_{\matC,k}}_{\mathrm{F}}^2\le \norm{\matX-\hat\matX}_{\mathrm{F}}^2}.
\end{lemma}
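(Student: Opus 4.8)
The plan is to reduce the optimality claim to the Eckart--Young theorem by peeling off a term that is common to both sides of the inequality. Write $\matP=\matC\matC^\dagger$ for the orthogonal projector onto $\col(\matC)$, so that $\matX_\matC=\matP\matX$. The first assertion is then immediate: $\matX_{\matC,k}$ is by definition the best rank-$k$ approximation to $\matX_\matC$, so it has rank at most $k$, and its columns are linear combinations of the top left singular vectors of $\matX_\matC$, which span $\col(\matX_\matC)\subseteq\col(\matC)$. Hence the columns of $\matX_{\matC,k}$ lie in the span of $\matC$.

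For the optimality claim, the key device is a Pythagorean splitting. For any matrix $\hat\matX$ whose columns lie in $\col(\matC)$ we have $\matP\hat\matX=\hat\matX$, so I would write
$$\matX-\hat\matX=(\matI-\matP)\matX+(\matX_\matC-\hat\matX).$$
I would then check that the two summands are orthogonal in the matrix inner product: since $\matX_\matC-\hat\matX=\matP(\matX_\matC-\hat\matX)$, $\matP$ is a symmetric idempotent, and $(\matI-\matP)\matP=\bm0$, one gets $((\matI-\matP)\matX)\transp(\matX_\matC-\hat\matX)=\matX\transp(\matI-\matP)\matP(\matX_\matC-\hat\matX)=\bm0$. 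The generalized Pythagoras theorem from the Notation section then gives
$$\FNormS{\matX-\hat\matX}=\FNormS{(\matI-\matP)\matX}+\FNormS{\matX_\matC-\hat\matX}.$$

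Applying this identity both to an arbitrary rank-$k$ competitor $\hat\matX$ (with columns in $\col(\matC)$) and to the specific choice $\hat\matX=\matX_{\matC,k}$, the leading term $\FNormS{(\matI-\matP)\matX}$ is identical in the two expressions and cancels from the comparison. What remains to establish is exactly $\FNormS{\matX_\matC-\matX_{\matC,k}}\le\FNormS{\matX_\matC-\hat\matX}$, which is the content of Eckart--Young: $\matX_{\matC,k}$ is the optimal rank-$k$ approximation to $\matX_\matC$ and $\hat\matX$ has rank at most $k$ (the extra constraint that $\hat\matX$ have columns in $\col(\matC)$ only shrinks the feasible set, so the inequality still holds). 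Chaining the two Pythagorean identities through this inequality yields $\FNormS{\matX-\matX_{\matC,k}}\le\FNormS{\matX-\hat\matX}$, as required.

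The only delicate step, and the one I would verify most carefully, is the orthogonality $((\matI-\matP)\matX)\transp(\matX_\matC-\hat\matX)=\bm0$ that licenses the Pythagorean split; it rests entirely on $\matP$ being a symmetric projector and on both $\matX_\matC$ and $\hat\matX$ having all columns inside $\col(\matC)$. Once that is in hand, the lemma is essentially a restriction of Eckart--Young to the subspace $\col(\matC)$, with the orthogonal residual $(\matI-\matP)\matX$ playing no role in the minimization.
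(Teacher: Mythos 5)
Your proof is correct. The paper itself does not prove this lemma---it is cited from \cite{malik195}---but your argument (split $\matX-\hat\matX$ into $(\matI-\matC\matC^\dagger)\matX$ plus a component inside $\col(\matC)$, verify orthogonality via the symmetric idempotent $\matC\matC^\dagger$, cancel the common residual term, and invoke Eckart--Young on $\matX_\matC$) is the standard proof of this fact and is exactly the reasoning the cited reference relies on. The only cosmetic slip is the remark that the top-$k$ left singular vectors ``span $\col(\matX_\matC)$''; they span a $k$-dimensional subspace of it, which is all the containment argument needs.
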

That is, \math{\matX_{\matC,k}} is the best rank-\math{k} approximation to 
\math{\matX} whose columns are in the span of \math{\matC}.
An efficient algorithm to compute \math{\matX_{\matC,k}} is also given in 
\cite{malik195}. The algorithm runs in \math{O(ndr+(n+d)r^2)} time. 
We reproduce that
algorithm here.
\begin{center}
\fbox{
\begin{minipage}{0.8\textwidth}
\underline{{\bf Algorithm to compute \math{\matX_{\matC,k}}}}
\\[0.1in]
{\bf Inputs:} \math{\matX\in\R^{n\times d}}, \math{\matC\in\R^{n\times r}},
\math{k\le r}.

{\bf Output:} \math{\matX_{\matC,k}\in\R^{n\times d}}.
\begin{algorithmic}[1]
\STATE Compute a \math{\matQ\matR}-factorization of \math{\matC} as
\math{\matC=\matQ\matR}, with $\matQ \in \R^{n \times r},$ 
$\matR \in \R^{r \times r}$. 
\STATE Compute \math{(\matQ\transp\matX)_k \in \R^{r \times d}} via SVD (the best rank-\math{k}
approximation to \math{\matQ\transp\matX}).
\STATE Return \math{\matX_{\matC,k}=\matQ(\matQ\transp\matX)_k\in\R^{n\times d}}.
\end{algorithmic}
\end{minipage}
}
\end{center}
We mention that it is possible to compute a 
\math{(1+\epsilon)}-approximation to \math{\matX_{\matC,k}} more quickly 
using randomized projections~\cite[section 3.5.2]{BW14full}.

\subsection{Sparse Linear Encoders from CSSP}\label{sec:encoders}

The main result of this section is to show that if we can obtain a set of 
columns \math{\matC} for which \math{\matX_{\matC,k}} is a good approximation
to \math{\matX}, then we can get a good sparse linear encoder for $\matX$. We first give
the algorithm for obtaining an \math{r}-sparse linear encoder
from \math{r}-columns \math{\matC}, 
and then we give the approximation guarantee. For simplicity, in the 
algorithm below we assume that \math{\matC} has full column rank. 
This is not essential, and if \math{\matC} is rank deficient, then 
the algorithm can be modified to first remove the dependent
columns in \math{\matC}.
\begin{center}
\fbox{
\begin{minipage}{0.8\textwidth}
\underline{{\bf Blackbox algorithm to compute encoder from CSSP}}
\\[0.1in]
{\bf Inputs:} 
\math{\matX\in\R^{n\times d}}; 
\math{\matC\in\R^{n\times r}} with \math{\matC=\matX\Omega} and 
\math{\Omega=[\ee_{i_1},\ldots,\ee_{i_r}]};
\math{k\le r}.

{\bf Output:} \math{r}-sparse linear encoder 
\math{\matH\in\R^{d\times k}}.

\begin{algorithmic}[1]
\STATE Compute a \math{\matQ\matR}-factorization of \math{\matC} as
\math{\matC=\matQ\matR}, with $\matQ \in \R^{n \times r},$ 
$\matR \in \R^{r \times r}$.
\STATE Compute the SVD of \math{\matR^{-1}(\matQ\transp\matX)_k},
\math{\matR^{-1}(\matQ\transp\matX)_k=\matU_{\matR}\Sigma_{\matR}\matV_{\matR}\transp},
where 

\centerline{
\math{\matU_{\matR}\in\R^{r\times k}},\qquad
\math{\Sigma_{\matR}\in\R^{k\times k}}\qquad and \qquad
\math{\matV_{\matR}\in\R^{d\times k}}.}
\vspace*{3pt}
\STATE Return \math{\matH=\Omega\matU_{\matR}\in\R^{d\times k}}.
\end{algorithmic}
\end{minipage}
}
\end{center}
In the algorithm above, since \math{\matC} has full column rank, \math{\matR}
is invertible. Note that the algorithm can be modified to 
accomodate \math{r<k}, in which case it will output fewer than
\math{k} factors in the output encoding.
In step 2, even though \math{\matR^{-1}(\matQ\transp\matX)_k} is an
\math{r\times d} matrix, it has rank \math{k}, hence the dimensions of 
\math{\matU_{\matR},\ \Sigma_\matR,\ \matV_\matR} depend on \math{k}, not \math{r}.
By Lemma~\ref{lem:r-sparse}, the
encoder \math{\matH} produced by the algorithm is \math{r}-sparse. 
Also observe that \math{\matH} has orthonormal columns, as is typically
desired for an encoder:
\mand{
\matH\transp\matH=\matU_{\matR}\transp\Omega\transp\Omega\matU_{\matR}
=\matU_{\matR}\transp\matU_{\matR}=\matI_k.
}
Actually,
the encoder \math{\matH} has a much stronger property than 
\math{r}-sparsity, namely that in every
column the non-zeros can only be located at the \emph{same} \math{r}
coordinates. We now compute the running time of the algorithm. The first two
steps are as in the algorithm to compute \math{\matX_{\matC,k}} and
so take \math{O(ndr+(n+d)r^2)} time
(the multiplication by \math{\matR^{-1}} and the computation of an additional
SVD do not affect the asymptotic running time). 
The last step involves two matrix
multiplications which can be done in \math{O(r^2k+drk)} additional time,
which also 
does not affect the asymptotic running time of \math{O(ndr+(n+d)r^2)}.
We now show that the encoder \math{\matH} produced by the algorithm is
good if the columns result in a good rank-\math{k}
approximation \math{\matX_{\matC,k}}.
\begin{theorem}[Blackbox encoder from CSSP]\label{theorem:blackbox}
Given \math{\matX\in\R^{n\times d}}, \math{\matC=\matX\Omega \in \R^{n \times r}} with
\math{\Omega=[\ee_{i_1},\ldots,\ee_{i_r}]} and \math{k\le r}, let
\math{\matH}  be the \math{r}-sparse 
linear encoder produced by the 
algorithm above, which runs in  \math{O(ndr+(n+d)r^2)} time. Then,
the information loss satisfies
\mand{
\ell(\matH,\matX)=\norm{\matX-\matX\matH(\matX\matH)^\dagger\matX}_{\mathrm{F}}^2
\le
\norm{\matX-\matX_{\matC,k}}_{\mathrm{F}}^2.
}
\end{theorem}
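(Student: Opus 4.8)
The plan is to reduce the theorem to a single column-space containment. First I would recall that the information loss $\ell(\matH,\matX)=\FNormS{\matX-\matX\matH(\matX\matH)^\dagger\matX}$ is exactly the error incurred by orthogonally projecting the columns of $\matX$ onto the subspace $\col(\matX\matH)$, since $\matX\matH(\matX\matH)^\dagger$ is the orthogonal projector onto $\col(\matX\matH)$. The same column-by-column Pythagoras argument used for $\matX_\matC$ earlier shows that this projection is the \emph{best} approximation to $\matX$ among all matrices whose columns lie in $\col(\matX\matH)$: for any $\matY$ with $\col(\matY)\subseteq\col(\matX\matH)$ one has $\FNormS{\matX-\matX\matH(\matX\matH)^\dagger\matX}\le\FNormS{\matX-\matY}$. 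Hence it suffices to exhibit $\matX_{\matC,k}$ as one such $\matY$, i.e.\ to prove $\col(\matX_{\matC,k})\subseteq\col(\matX\matH)$, and then take $\matY=\matX_{\matC,k}$.

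The heart of the argument is therefore a direct computation relating $\matX\matH$ and $\matX_{\matC,k}$ through the factorizations produced by the algorithm. Since $\matH=\Omega\matU_{\matR}$ and $\matC=\matX\Omega=\matQ\matR$, I would first write $\matX\matH=\matX\Omega\matU_{\matR}=\matC\matU_{\matR}=\matQ\matR\matU_{\matR}$. Next, by the algorithm that computes $\matX_{\matC,k}$, we have $\matX_{\matC,k}=\matQ(\matQ\transp\matX)_k$. The key step is to substitute the SVD from Step~2 of the encoder algorithm, $\matR^{-1}(\matQ\transp\matX)_k=\matU_{\matR}\Sigma_{\matR}\matV_{\matR}\transp$, which rearranges (using that $\matR$ is invertible because $\matC$ has full column rank) to $(\matQ\transp\matX)_k=\matR\matU_{\matR}\Sigma_{\matR}\matV_{\matR}\transp$. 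Substituting gives $\matX_{\matC,k}=\matQ\matR\matU_{\matR}\Sigma_{\matR}\matV_{\matR}\transp=(\matX\matH)\,\Sigma_{\matR}\matV_{\matR}\transp$, which exhibits every column of $\matX_{\matC,k}$ as a linear combination of the columns of $\matX\matH$, establishing $\col(\matX_{\matC,k})\subseteq\col(\matX\matH)$.

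Combining the two pieces finishes the proof: since $\matX_{\matC,k}$ has all its columns in $\col(\matX\matH)$, optimality of the orthogonal projection yields $\ell(\matH,\matX)=\FNormS{\matX-\matX\matH(\matX\matH)^\dagger\matX}\le\FNormS{\matX-\matX_{\matC,k}}$, as claimed. The running-time claim needs no separate work, as it was already accounted for in the discussion of the algorithm preceding the statement.

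I expect the main obstacle to be bookkeeping rather than anything conceptual: correctly threading the QR factor $\matR$ and its inverse through the two SVDs so that the cancellation $\matQ\matR\matU_{\matR}=\matX\matH$ is \emph{exact}, and checking that the factor $\matR^{-1}$ inserted in Step~2 is precisely what makes $\matU_{\matR}$ the right object to multiply $\Omega$ by (so that $\matX\matH=\matC\matU_{\matR}$ rather than a distorted version). I would also note that the full-column-rank hypothesis on $\matC$ enters only to guarantee that $\matR^{-1}$ exists; the rank-deficient case is handled, as the text remarks, by first discarding dependent columns, which leaves $\col(\matC)$ unchanged and hence does not affect the argument.
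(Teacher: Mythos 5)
Your proof is correct and follows essentially the same route as the paper: the central identity $\matX_{\matC,k}=\matQ\matR\matU_{\matR}\Sigma_{\matR}\matV_{\matR}\transp=\matX\matH\,\Sigma_{\matR}\matV_{\matR}\transp$ is exactly the paper's computation showing the decoder $\matG=\Sigma_{\matR}\matV_{\matR}\transp$ achieves reconstruction $\matX_{\matC,k}$, and your appeal to optimality of orthogonal projection onto $\col(\matX\matH)$ is the same step as the paper's observation that the optimal decoder cannot do worse than this particular $\matG$.
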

The theorem says that if we can find a set of \math{r} columns within
which a good rank-\math{k} approximation to \math{\matX} exists, then
we can construct a good linear sparse encoder. 

\begin{proof}
We show that there is a decoder \math{\matG} for which
\math{\norm{\matX-\matX\matH\matG}_{\mathrm{F}}^2=\norm{\matX-\matX_{\matC,k}}_{\mathrm{F}}^2.} 
The theorem then follows because the optimal decoder cannot have higher 
reconstruction error. We use \math{\matU_\matR}, \math{\Sigma_\matR} and
\math{\matV_\matR} as defined in the algorithm, and set
\math{\matG=\Sigma_\matR\matV_\matR\transp}.
We then have,
\eqan{
\matX\matH\matG
&=&\matX\Omega\matU_\matR\Sigma_\matR\matV_\matR\transp\\
&=&\matX\Omega\matR^{-1}(\matQ\transp\matX)_k\\
&=&\matC\matR^{-1}(\matQ\transp\matX)_k
}
The first step is by construction in the algorithm because 
\math{\matU_\matR,\ \Sigma_\matR,\ \matV_\matR} is obtained from the SVD of 
\math{\matR^{-1}(\matQ\transp\matX)_k}.
The second step is because \math{\matC=\matX\Omega}.
Since \math{\matC=\matQ\matR} (and \math{\matR} is invertible),
it follows that \math{\matC\matR^{-1}=\matQ}. Hence,
\eqan{
\matX\matH\matG
&=&\matQ(\matQ\transp\matX)_k\\
&=&\matX_{\matC,k},
}
hence 
\math{\norm{\matX-\matX\matH\matG}_{\mathrm{F}}^2=\norm{\matX-\matX_{\matC,k}}_{\mathrm{F}}^2},
concluding the proof.
\end{proof}

What remains is to find a sampling matrix \math{\Omega} which gives a
good set of columns \math{\matC=\matX\Omega} for which
\math{\norm{\matX-\matX_{\matC,k}}_{\mathrm{F}}^2} is small. The main tool to obtain
\math{\matC} and \math{\Omega} 
was developed in \cite{malik195} which gave a constant factor
deterministic approximation algorithm and a relative-error randomized 
approximation algorithm. We state a simplified form of
the  result and then discuss
various ways in which this result can be enhanced. The main point is that
any algorithm to construct a good set of columns can be used 
as a black box to get a sparse linear encoder.
\begin{theorem}[Near-optimal CSSP \cite{malik195}]\label{theorem:Cmain}
Given \math{\matX\in\R^{n\times d}} of rank \math{\rho} and target rank 
\math{k}: 
\begin{enumerate}[(i)]
\item (Theorem 2 in \cite{malik195}) For sparsity parameter \math{r>k}, 
there is a deterministic algorithm which runs in time
\math{T_{V_k}+O(ndk+dk^3)} to construct a sampling matrix
\math{\Omega=[\ee_{i_1},\ldots,\ee_{i_r}]} and corresponding 
columns \math{\matC=\matX\Omega} such that
\mand{
\norm{\matX-\matX_{\matC,k}}_{\mathrm{F}}^2\le\left(1+\frac{1}{(1-\sqrt{k/r})^2}\right)
\norm{\matX-\matX_{k}}_{\mathrm{F}}^2.
}
\item (Simplified Theorem 5 in \cite{malik195})
For sparsity parameter \math{r>5k}, 
there is a randomized algorithm which runs in time
\math{O(ndk+dk^3)} to construct a sampling matrix
\math{\Omega=[\ee_{i_1},\ldots,\ee_{i_r}]} and corresponding 
columns \math{\matC=\matX\Omega} such that
\mand{
\Exp\left[\norm{\matX-\matX_{\matC,k}}_{\mathrm{F}}^2\right]\le\left(1+\frac{5k}{r-5k}\right)\norm{\matX-\matX_{k}}_{\mathrm{F}}^2.
}
\end{enumerate}
\end{theorem}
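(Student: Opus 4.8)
The plan is to reduce the bound on $\FNormS{\matX-\matX_{\matC,k}}$ to a statement about a \emph{rescaled} sampling matrix and then invoke a dual-set spectral--Frobenius sparsification to choose the $r$ columns. Write the split $\matX=\matX_k+\matE$ with $\matE=\matX-\matX_k$ (so $\matE\matV_k=\bm0$), and let $\matW=\Omega\matS\in\R^{d\times r}$, where $\matS$ is a nonnegative diagonal rescaling. Since rescaling columns does not change their span, $\matX_{\matC,k}$ is identical whether we use $\matC=\matX\Omega$ or $\matX\matW$, so $\matS$ is purely a device for the sparsification step. First I would exhibit an explicit rank-$k$ matrix whose columns lie in the span of $\matC$: assuming $\matZ:=\matV_k\transp\matW\in\R^{k\times r}$ has full row rank, set $\hat\matX=\matX\matW\matZ^\dagger\matV_k\transp$. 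A short computation using $\matZ\matZ^\dagger=\matI_k$ gives $\matX_k\matW\matZ^\dagger\matV_k\transp=\matX\matV_k\matZ\matZ^\dagger\matV_k\transp=\matX_k$, hence $\matX-\hat\matX=\matE-\matE\matW\matZ^\dagger\matV_k\transp$.

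By Lemma~\ref{lem:XCk}, $\matX_{\matC,k}$ is at least as good as any rank-$k$ matrix in the span of $\matC$, so $\FNormS{\matX-\matX_{\matC,k}}\le\FNormS{\matX-\hat\matX}$. The two summands of $\matX-\hat\matX$ are Frobenius-orthogonal: since $\matE\matV_k=\bm0$, the product $\matE\bigl(\matE\matW\matZ^\dagger\matV_k\transp\bigr)\transp=\bm0$, so (applying the generalized Pythagoras theorem from the Notation section to the transposes) $\FNormS{\matX-\hat\matX}=\FNormS{\matE}+\FNormS{\matE\matW\matZ^\dagger\matV_k\transp}$. Because $\matV_k\transp$ has orthonormal rows and using submultiplicativity, $\FNormS{\matE\matW\matZ^\dagger\matV_k\transp}=\FNormS{\matE\matW\matZ^\dagger}\le\FNormS{\matE\matW}\cdot\TNormS{\matZ^\dagger}=\FNormS{\matE\matW}/\sigma_{\min}^2(\matV_k\transp\matW)$. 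This yields the key structural inequality
\[
\FNormS{\matX-\matX_{\matC,k}}\le\FNormS{\matX-\matX_k}+\frac{\FNormS{\matE\matW}}{\sigma_{\min}^2(\matV_k\transp\matW)}.
\]

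It now suffices, for part (i), to choose $\Omega,\matS$ so that \emph{simultaneously} $\sigma_{\min}^2(\matV_k\transp\matW)\ge(1-\sqrt{k/r})^2$ and $\FNormS{\matE\matW}\le\FNormS{\matE}=\FNormS{\matX-\matX_k}$; substituting these into the structural inequality gives exactly the factor $1+1/(1-\sqrt{k/r})^2$. I would obtain both bounds at once from a deterministic dual-set sparsification run on the $d$ columns of $\matV_k\transp$ (vectors in $\R^k$, controlling a spectral lower barrier) together with the $d$ columns of $\matE$ (controlling a Frobenius upper barrier): a greedy barrier argument selects $r$ indices and weights achieving both simultaneously. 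Computing $\matV_k$ (time $T_{V_k}$) and running the greedy selection over the $d$ columns accounts for the stated $T_{V_k}+O(ndk+dk^3)$ runtime.

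For part (ii) I would keep the same structural inequality but replace deterministic selection by randomized sampling: draw $r$ columns (with rescaling) according to a distribution built from the leverage scores of $\matV_k$ and the residual Frobenius weights of $\matE$. A second-moment estimate bounds $\Exp\left[\FNormS{\matE\matW}\right]$ by $\FNormS{\matE}$, and a matrix-concentration / approximate-isometry argument controls $\sigma_{\min}^2(\matV_k\transp\matW)$ once $r>5k$, yielding the $1+5k/(r-5k)$ factor in expectation; using a fast (sketched) approximation to $\matV_k$ and its leverage scores avoids the exact SVD and gives $O(ndk+dk^3)$ time. The main obstacle is the dual-set spectral--Frobenius sparsification of part (i): showing that the greedy potential (simultaneous upper/lower spectral barriers plus a Frobenius barrier) admits a valid move at every step is the technically heavy piece, whereas the structural reduction above and the randomized second-moment bounds are comparatively routine.
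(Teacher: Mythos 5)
Your part (i) is a faithful reconstruction of the argument in the cited source \cite{malik195} (which the paper itself invokes without proof): the structural inequality $\FNormS{\matX-\matX_{\matC,k}}\le\FNormS{\matX-\matX_k}+\FNormS{\matE\matW}/\sigma_{\min}^2(\matV_k\transp\matW)$ followed by the dual-set spectral--Frobenius sparsification is exactly how that theorem is proved, and your derivation of the structural inequality (via $\hat\matX=\matX\matW(\matV_k\transp\matW)^\dagger\matV_k\transp$ and Pythagoras using $\matE\matV_k=\bm0$) is correct.

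Part (ii), however, takes a genuinely different route from the paper's, and the route has a gap. You propose one-shot randomized sampling of all $r$ columns from leverage-score/residual probabilities and then controlling $\sigma_{\min}^2(\matV_k\transp\matW)$ by matrix concentration ``once $r>5k$.'' That step fails at the claimed sparsity: making $\matV_k\transp\matW$ well conditioned by i.i.d.\ leverage-score sampling requires $r=\Omega(k\log k)$ columns (matrix Chernoff; a coupon-collector instance where $\matV_k$ has $k$ disjoint supports shows the $\log k$ is necessary), so no constant multiple of $k$ suffices. Moreover, even on the event that $\sigma_{\min}$ is bounded below, passing from a high-probability bound on the ratio $\FNormS{\matE\matW}/\sigma_{\min}^2(\matV_k\transp\matW)$ to the clean expectation bound $(1+5k/(r-5k))\FNormS{\matX-\matX_k}$ is not routine, since the denominator can degenerate on the complementary event. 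The paper's proof sketch avoids both issues: it first uses the \emph{deterministic} part-(i) machinery (with an approximate $\hat\matV_k$) on $5k$ columns to get a constant-factor approximation $c_0=5$, and then applies one round of adaptive sampling \cite{DV06} with $s=r-5k$ additional columns drawn proportionally to the squared residual norms; the adaptive-sampling lemma directly yields $\Exp[\norm{\matX-\matX_{\matC,k}}_{\mathrm{F}}^2]\le\norm{\matX-\matX_k}_{\mathrm{F}}^2+(c_0k/s)\norm{\matX-\matX_k}_{\mathrm{F}}^2$ in expectation, with no spectral concentration required. To repair your part (ii) you should replace the single-shot sampling by this two-stage (constant factor, then adaptive boosting) scheme.
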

\begin{proof}(Sketch for part (\rn{2}).)
We follow the proof in \cite{malik195}, using the 
same notation in \cite{malik195}. Choose \math{5k} columns
for the initial constant factor approximation and set the 
accuracy \math{\varepsilon_0} for the approximate SVD to compute
 \math{\hat\matV_k} to
a constant so that
\math{c_0=(1+\varepsilon_0)(1+(1-\sqrt{1/5})^{-2})=5}. The adaptive sampling step 
ensures a \math{(1+c_0k/s)}-approximation for the reconstruction,
where \math{s=r-5k}.
\end{proof}
\paragraph{Comments.}
\begin{enumerate}[1.]
\item In part (\rn{2}) of the theorem, 
setting \math{r=5k+5k/\varepsilon=O(k/\varepsilon)} ensures that 
\math{\matX_{\matC,k}} is a  
\math{(1+\varepsilon)}-reconstruction of \math{\matX_k} (in expectation).
At the expense of an increase in the running
time the sparsity can be reduced to \math{r\approx 2k/\varepsilon} while
still giving a \math{(1+\varepsilon)}-reconstruction.
The message is that \math{O(k/\varepsilon)}-sparsity suffices to get  
a \math{(1+\varepsilon)}-accurate reconstruction.
\remove{
Thus,
one can find an approximation to the top-PCA vector having at most about
20 non-zero loadings and a reconstruction error that is at most
10\% of the PCA reconstruction error. This holds for 
\emph{any} matrix \math{\matX}.
}

\item Part (\rn{2}) of the theorem gives a bound on 
\math{\Exp[\norm{\matX-\matX_{\matC,k}}_{\mathrm{F}}^2]}. The expectation is with respect
to random choices in the algorithm. Using an application of
Markov's inequality to the positive random variable
\math{\norm{\matX-\matX_{\matC,k}}_{\mathrm{F}}^2
-\norm{\matX-\matX_{k}}_{\mathrm{F}}^2}, 
 \math{\norm{\matX-\matX_{\matC,k}}_{\mathrm{F}}^2\le(1+2\varepsilon)
\norm{\matX-\matX_{k}}_{\mathrm{F}}^2} holds 
with probability at least \math{\frac12}. 
This can be boosted to high-probability,
at least \math{1-\delta}, for an additional \math{\log\frac1\delta} factor
increase in the
running time.

\item
In part (\rn{1}) of the theorem
\math{\matV_k} (the top-\math{k} right singular vectors) are 
used in the algorithm, and  \math{T_{V_k}} is the time to compute 
\math{\matV_k}.
To compute \math{\matV_k} exactly, the only known algorithm is via the 
full SVD of \math{\matX}, which takes \math{O(nd\min\{n,d\})} time. 
It turns out that an approximate 
SVD will do. Suppose that \math{\hat\matV_k} satisfies
\mand{
\norm{\matX-\matX\hat\matV_k\hat\matV_k\transp}_{\mathrm{F}}^2\le\alpha
\norm{\matX-\matX_{k}}_{\mathrm{F}}^2.
}
The approximation \math{\hat\matV_k} can be used in the algorithm instead of
\math{\matV_k} and the error will increase by an additional factor of 
\math{\alpha}. \citet{malik195} gives an efficient
randomized approximation for \math{\matV_k} and there is also a
 recent deterministic algorithm 
given in \cite[Theorem 3.1]{GP13} which
computes such an approximate \math{\hat\matV_k} in time
\math{O(ndk\varepsilon^{-2})} with \math{\alpha=1+\varepsilon}.

\item 
The details of the algorithm that achieves part (\rn{2}) of the theorem
are given in~\cite{malik195}.
We simply state the three main steps.
\begin{enumerate}[(i)]
\item Use a 
randomized projection to compute an approximation \math{\hat\matV_k}.
\item Apply the algorithm from  part (\rn{1}) of the theorem using
\math{\hat\matV_k} to
get a constant factor approximation.
\item Apply one round of randomized adaptive column 
sampling~\cite{DV06} which boosts the constant factor approximation to a 
\math{1+\varepsilon} approximation.
\end{enumerate}
This entire algorithm can be de-randomized to give a deterministic algorithm
by using the deterministic approximation for \math{\hat\matV_k} from 
\cite{GP13} and a derandomization of the adaptive sampling step which 
appeared in a recent result~\cite{BW14}. The tradeoff will be a non-trivial
increase in the running time. 
The entire algorithm can also be made to run in input-sparsity-time
(see \cite{BW14full} for details).

\item {\bf Doubly sparse auto-encoders.} 
It is possible to 
construct an \math{O(k/\varepsilon)}-sparse 
linear auto-encoder and a corresponding
``sparse'' decoder with the following property:
the reconstruction \math{\matX\matH\matG} has rows which are
all linear combinations of \math{O(k/\varepsilon)} rows of \math{\matX}. 
Thus, the encoding identifies features which have loadings on only a few
dimensions, and the decoder for the encoding is based on a small number of
data points. Further, the reconstruction error is near optimal,
\math{\norm{\matX-\matX\matH\matG}_{\mathrm{F}}^2
\le(1+\varepsilon)\norm{\matX-\matX_{k}}_{\mathrm{F}}^2}.

We give the rough sketch for this doubly sparse linear auto-encoder, without 
getting into too many details. The main tool is the \math{\matC\Phi\matR} 
matrix
decomposition~\cite{DK03}. An asymptotically optimal
construction was given in \cite{BW14} which constructs columns
\math{\matC=\matX\Omega_1\in\R^{n\times r_1}} 
and rows \math{\matR=\Omega_2\transp\matX\in\R^{r_2\times d}}
and a rank-\math{k} matrix \math{\Phi\in\R^{r_1\times r_2}} for which 
\math{\norm{\matX-\matC\Phi\matR}_{\mathrm{F}}^2\le 
(1+\varepsilon)\norm{\matX-\matX_k}_{\mathrm{F}}^2},
where \math{\Omega_1=[\ee_{i_1},\ldots,\ee_{i_{r_1}}]} and 
\math{\Omega_2=[\ee_{i_1},\ldots,\ee_{i_{r_2}}]} are sampling matrices
with \math{r_1=O(k/\varepsilon)} and \math{r_2=O(k/\varepsilon)}.
Let \math{\Phi=\matU_\Phi\Sigma_\Phi\matV_\Phi\transp} be the 
SVD of \math{\Phi}, where 
\math{\matU_\Phi\in\R^{r_1\times k}},
\math{\Sigma_\Phi\in\R^{k\times k}},
\math{\matV_\Phi\in\R^{r_2\times k}}.
Then, we set the encoder to 
\math{\matH=\Omega_1\matU_\Phi} which is \math{O(k/\varepsilon)}-sparse,
and the decoder to
\math{\matG=\Sigma_\Phi\matV_\Phi\transp\Omega_2\transp\matX}.
The reconstruction is 
\math{
\matX\Omega_1\matU_\Phi\Sigma_\Phi\matV_\Phi\transp\Omega_2\transp\matX
};
The matrix \math{\Omega_2\transp\matX} contains 
\math{O(k/\varepsilon)} rows of \math{\matX} and so the reconstruction
is based on linear combinations of these rows.

The trade off for getting this doubly-sparse encoding is a decrease in the
accuracy, since the decoder is now only getting an approximation to the
best rank-\math{k} reconstruction within the column-span of \math{\matC}.
The advantage of the doubly sparse encoder is that it
identifies a small set of \math{O(k/\varepsilon)} 
original features that are important for 
the dimension-reduced features. It also simultaneously
identifies a small number of \math{O(k/\varepsilon)} 
data points that are important for reconstructing 
the entire data matrix.
\end{enumerate}
We are now ready for the main theoretical result of the section. By
using Theorem~\ref{theorem:Cmain} in our black-box linear encoder, we obtain our
algorithm to construct a sparse linear encoder with a provable 
approximation guarantee.
\begin{center}
\fbox{
\begin{minipage}{0.85\textwidth}
\underline{{\bf Sparse Linear Encoder Algorithm}}
\\[0.1in]
{\bf Inputs:} 
\math{\matX\in\R^{n\times d}}; target rank
\math{k\le\rank(\matX)}; sparsity \math{r>k}.

{\bf Output:} Near-optimal \math{r}-sparse linear encoder 
\math{\matH\in\R^{d\times k}}.

\begin{algorithmic}[1]
\STATE Use the algorithm from Theorem~\ref{theorem:Cmain}-(\rn{2}) 
to compute
columns \math{\matC=\matX\Omega\in\R^{n\times r}}, with inputs 
\math{\matX,\ k,\ r}.
\STATE Return the encoder
\math{\matH} computed by using 
\math{\matX,\ \matC,\ k} as input to the CSSP-blackbox encoder algorithm.
\end{algorithmic}
\end{minipage}
}
\end{center}
Using Theorem~\ref{theorem:Cmain} in
Theorem~\ref{theorem:blackbox}, we have an approximation guarantee
for our algorithm.
\begin{theorem}[Sparse Linear Encoder]\label{theorem:SPCA}
Given \math{\matX\in\R^{n\times d}} of rank \math{\rho}, the target 
number of sparse PCA vectors \math{k\le\rho},
and sparsity parameter \math{r>5k}, 
there is a randomized algorithm running in time
\math{O(ndr+(n+d)r^2+dk^3)} 
which constructs an \math{r}-sparse encoder \math{\matH} such that:
\mand{
\Exp[\ell(\matH,\matX)=\Exp\left[\norm{\matX-\matX\matH(\matX\matH)^\dagger\matX}_{\mathrm{F}}^2\right]
\le
\left(1+\frac{5k}{r-5k}\right)
\norm{\matX-\matX_{k}}_{\mathrm{F}}^2.
}
\end{theorem}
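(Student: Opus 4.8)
The plan is to obtain Theorem~\ref{theorem:SPCA} as a direct composition of the two results already established: Theorem~\ref{theorem:Cmain}-(\rn{2}) supplies a good set of columns, and Theorem~\ref{theorem:blackbox} turns those columns into a sparse encoder. First I would run the randomized CSSP algorithm of Theorem~\ref{theorem:Cmain}-(\rn{2}) on inputs \math{\matX,k,r} (this is legal precisely because \math{r>5k}) to produce a sampling matrix \math{\Omega=[\ee_{i_1},\ldots,\ee_{i_r}]} and columns \math{\matC=\matX\Omega} satisfying
\mand{
\Exp\left[\norm{\matX-\matX_{\matC,k}}_{\mathrm{F}}^2\right]\le\left(1+\frac{5k}{r-5k}\right)\norm{\matX-\matX_k}_{\mathrm{F}}^2,
}
where the expectation is over the internal coin flips of the algorithm. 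By Theorem~\ref{theorem:Cmain}-(\rn{2}) this step runs in \math{O(ndk+dk^3)} time.

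Next I would feed \math{\matX,\matC,k} into the CSSP-blackbox encoder algorithm; this is exactly the hypothesis of Theorem~\ref{theorem:blackbox}, so the encoder \math{\matH} it returns is \math{r}-sparse and satisfies, \emph{deterministically for the given} \math{\matC},
\mand{
\ell(\matH,\matX)=\norm{\matX-\matX\matH(\matX\matH)^\dagger\matX}_{\mathrm{F}}^2\le\norm{\matX-\matX_{\matC,k}}_{\mathrm{F}}^2.
}
By Theorem~\ref{theorem:blackbox} this step contributes \math{O(ndr+(n+d)r^2)} time.

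The one point that deserves care—and the closest thing to an obstacle in an otherwise routine composition—is combining a randomized guarantee with a deterministic one. The inequality of Theorem~\ref{theorem:blackbox} holds \emph{pointwise} for every realization of \math{\matC} output by the CSSP step; it is not an in-expectation statement. I would therefore take expectations of both sides over the algorithm's randomness and chain with the CSSP bound, using monotonicity and linearity of expectation:
\mand{
\Exp[\ell(\matH,\matX)]\le\Exp\left[\norm{\matX-\matX_{\matC,k}}_{\mathrm{F}}^2\right]\le\left(1+\frac{5k}{r-5k}\right)\norm{\matX-\matX_k}_{\mathrm{F}}^2,
}
which is precisely the claimed approximation guarantee.

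Finally I would account for the running time by summing the two stages. Since \math{k<r}, the term \math{ndk} from the CSSP stage is absorbed into \math{ndr}, whereas \math{dk^3} need not be dominated by \math{(n+d)r^2} and is retained as a separate summand; this yields the stated total \math{O(ndr+(n+d)r^2+dk^3)}. There is no deeper difficulty here: all the real content resides in the two cited theorems, and the proof is a short reduction whose only subtlety is the monotone passage to expectations described above.
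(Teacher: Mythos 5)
Your proposal is correct and is exactly the paper's argument: the paper proves Theorem~\ref{theorem:SPCA} by plugging the randomized CSSP guarantee of Theorem~\ref{theorem:Cmain}-(\rn{2}) into the pointwise bound of Theorem~\ref{theorem:blackbox} and summing the running times. Your explicit remark that the blackbox inequality holds for every realization of \math{\matC}, so one may pass to expectations by monotonicity, is the only subtlety and you handle it correctly.
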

\paragraph{Comments.}
\begin{enumerate}[1.]
\item The expectation is over the random choices made in the
algorithm to construct \math{\matH}.
All the comments from Theorem~\ref{theorem:Cmain} apply here as well.
In particular, this result is easily converted to a high probability
guarantee, or even a deterministic guarantee at the expense of some accuracy
and a (polynomial in \math{d,r})-factor increase in the
 computational cost.

\item
The guarantee is with respect to \math{\norm{\matX-\matX_{k}}_{\mathrm{F}}^2},
which is the best possible rank-\math{k} reconstruction with 
\math{k} dense optimal features obtained from PCA. Our result shows that
\math{O(k/\varepsilon)}-sparse features 
suffices to mimic top-\math{k} (dense) PCA 
within a \math{(1+\varepsilon)}-factor error.

We presented the simplified version of the results requiring
sparsity \math{r>5k}. 
Actually our technique can be applied for any choice of \math{r>k}
with approximation guarantee 
\math{c(1+1/(1-\sqrt{k/r})^2)} where \math{c\approx 1} is a constant.

\item
The reconstruction error using our \math{r}-sparse encoder is 
compared with PCA, not with the optimal \math{r}-sparse encoder.
With respect to the optimal \math{r}-sparse encoder, our approximation
could be much better.
It would be a challenging task to get an approximation 
guarantee with respect to the best possible \math{r}-sparse linear-encoder. 
\end{enumerate}

\subsection{Lower Bound on Sparsity}

We define the \emph{combined sparsity} of \math{\matH} to be the
number of its rows that are non-zero. When \math{k}=1, the combined 
sparsity equals the sparsity of the single factor. 
The combined sparsity is the
total number of dimensions which have non-zero loadings among
all the factors.
Our algorithm produces an encoder with combined sparsity 
\math{O(k/\varepsilon)} and comes within
\math{(1+\varepsilon)} of the minimum possible information loss.
We show that this is worst case optimal.
Specifically, there is a matrix \math{\matX} for which 
any linear encoder which can achieve a \math{(1+\varepsilon)}-approximate
reconstruction error as compared to PCA must have a combined 
sparsity
\math{r\ge k/\varepsilon}. So \math{\Omega(k/\varepsilon)}-sparsity is required
to achieve a \math{(1+\varepsilon)}-approximation.
The common case that is studied in the literature is with 
\math{k=1} (constructing a sparse top principal component).
Our lower bound shows that \math{\Omega(1/\varepsilon)}-sparsity is required
to get a \math{(1+\varepsilon)}-approximation
and our algorithm asymptotically achieves this lower bound, 
therefore we are asymptotically
optimal.

We show the converse of Theorem~\ref{theorem:blackbox}, namely that
from a linear auto-encoder with combined
sparsity \math{r}, we can construct 
\math{r} columns \math{\matC} for which \math{\matX_{\matC,k}} is a good
approximation to \math{\matX}. We then use the lower bound proved in 
\cite[Section 9.2]{malik195} which states that for 
\emph{any} \math{\delta>0}, there is a matrix 
\math{\matB} such that for any set of \math{r} of its columns
\math{\matC},
\mld{
\norm{\matB-\matB_{\matC,k}}_{\mathrm{F}}^2\ge
\norm{\matB-\matC\matC^\dagger\matB}_{\mathrm{F}}^2\ge
\left(1+\frac{k}{r}-\delta\right)\norm{\matB-\matB_{k}}_{\mathrm{F}}^2.
\label{eq:lower1}
}
Now suppose that \math{\matH} is a linear encoder with 
combined sparsity \math{r} for 
this matrix \math{\matB} from~\cite[Section 9.2]{malik195}, and
let 
\math{\matG} be a decoder satisfying
\mand{
\norm{\matB-\matB\matH\matG}_{\mathrm{F}}^2\le
(1+\varepsilon)\norm{\matB-\matB_k}_{\mathrm{F}}^2.
}
We offer the following elementary lemma which allows one to construct 
columns from a sparse encoder.
\begin{lemma}\label{lem:invbb}
Suppose \math{\matH} is a linear encoder for \math{\matB}
with combined sparsity \math{r} and
with decoder
\math{\matG}. Then,
\math{\matB\matH\matG=\matC\matY} for some set of \math{r} columns
of \math{\matB}, denoted \math{\matC},
 and some matrix \math{\matY\in\R^{r\times d}}.
\end{lemma}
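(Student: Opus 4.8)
The plan is to exploit the defining feature of combined sparsity: only $r$ rows of $\matH$ are nonzero, so forming $\matB\matH$ can only ``touch'' the $r$ columns of $\matB$ sitting in those same coordinates. The whole argument is bookkeeping with a coordinate-selection matrix of exactly the kind already used for column sampling.

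First I would let $S=\{i_1<\cdots<i_r\}\subseteq\{1,\ldots,d\}$ be the indices of the nonzero rows of $\matH$ (if the combined sparsity is strictly less than $r$, pad $S$ with arbitrary additional indices so that $|S|=r$ exactly; this only adjoins zero rows to the selection and changes no product). Following the CSSP notation, set $\Omega=[\ee_{i_1},\ldots,\ee_{i_r}]\in\R^{d\times r}$, so that $\matC=\matB\Omega\in\R^{n\times r}$ is precisely the submatrix of the $r$ columns of $\matB$ indexed by $S$.

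The key identity is $\matH=\Omega\Omega\transp\matH$. Here $\Omega\Omega\transp\in\R^{d\times d}$ is the diagonal $0/1$ matrix that zeroes out every row outside $S$; since by construction $\matH$ already has zero rows off $S$, left-multiplication by this projector leaves $\matH$ unchanged. Granting this, I would compute $\matB\matH=\matB\Omega\Omega\transp\matH=\matC(\Omega\transp\matH)$, where $\Omega\transp\matH\in\R^{r\times k}$ is just the block of $r$ nonzero rows of $\matH$. Multiplying on the right by the decoder then gives $\matB\matH\matG=\matC(\Omega\transp\matH)\matG=\matC\matY$ with $\matY=(\Omega\transp\matH)\matG\in\R^{r\times d}$, which is exactly the claimed factorization.

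There is no serious obstacle here: the lemma is elementary, as the surrounding text notes. The only point requiring a line of care is the projector identity $\matH=\Omega\Omega\transp\matH$, which is where the sparsity hypothesis is actually consumed; everything else is associativity of matrix multiplication together with correct tracking of dimensions ($\matC$ is $n\times r$, $\matY$ is $r\times d$). I would also remark that this lemma supplies the converse direction announced before its statement: it turns a sparse encoder/decoder pair into a set of $r$ columns $\matC$ whose span contains the reconstruction $\matB\matH\matG$, which is precisely what lets the column-based lower bound~\eqref{eq:lower1} be transferred to the sparse-encoder setting.
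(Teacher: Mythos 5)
Your proof is correct and is essentially identical to the paper's own argument: both select the sampling matrix $\Omega$ from the nonzero rows of $\matH$, invoke the projector identity $\matH=\Omega\Omega\transp\matH$, and set $\matC=\matB\Omega$, $\matY=\Omega\transp\matH\matG$. The padding remark for the case of combined sparsity strictly below $r$ is a small, correct refinement the paper leaves implicit.
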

\begin{proof}
Let \math{1\le i_1<i_2\cdots<i_r\le d} be the indices of the non-zero 
rows of \math{\matH} and let \math{\Omega=[\ee_{i_1},\ldots,\ee_{i_r}]} be
a sampling matrix for \math{\matB}. Every column of
\math{\matH} is in the subspace spanned by
the columns in~\math{\Omega}. Hence,
the projection of \math{\matH} onto the columns in \math{\Omega} gives back
\math{\matH}, that is \math{\matH=\Omega\Omega\transp\matH}. So,
\math{
\matB\matH\matG=\matB\Omega\Omega\transp\matH\matG=\matC\matY},
where \math{\matC=\matB\Omega} and \math{\matY=\Omega\transp\matH\matG}.
\end{proof}
By Lemma~\ref{lem:invbb}, and because 
\math{\norm{\matB-\matC\matC^\dagger\matB}_{\mathrm{F}}^2\le
\norm{\matB-\matC\matY}_{\mathrm{F}}^2} for any \math{\matY}, we have:
\mld{
\norm{\matB-\matC\matC^\dagger\matB}_{\mathrm{F}}^2
\le
\norm{\matB-\matC\matY}_{\mathrm{F}}^2
=
\norm{\matB-\matB\matH\matG}_{\mathrm{F}}^2\le
(1+\varepsilon)\norm{\matB-\matB_k}_{\mathrm{F}}^2.\label{eq:lower2}
}
It follows from \r{eq:lower1} and \r{eq:lower2} that 
\math{\varepsilon\ge k/r}.
Our algorithm gives a reconstruction error \math{\varepsilon=O(k/r)}, and
so our algorithm is asymptotically worst-case optimal.
The conclusion is that no linear encoder with combined
sparsity \math{r} can achieve an approximation ratio which is 
smaller than \math{(1+k/r)}. 
\begin{theorem}[Lower Bound on Sparsity]\label{theorem:lower}
There exists a data matrix \math{\matX} for which every \math{r}-sparse
encoder with \math{r<k/\epsilon} has an information loss which is 
strictly greater than \math{(1+\epsilon)\norm{\matX-\matX_k}_F^2}.
\end{theorem}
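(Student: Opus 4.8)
The plan is to produce the hard instance and then read off the conclusion from the ingredients already assembled before the statement, namely Lemma~\ref{lem:invbb} and the column-subset-selection lower bound \r{eq:lower1}. I would set \math{\matX=\matB}, where \math{\matB} is the matrix from \cite[Section 9.2]{malik195} realizing \r{eq:lower1}; the feature of that construction I rely on is that its slack \math{\delta>0} may be taken arbitrarily small. Since \math{r<k/\epsilon} is the same as \math{\epsilon<k/r}, I would first fix \math{\delta} with \math{0<\delta<k/r-\epsilon}—possible exactly because \math{k/r-\epsilon>0}—and then take \math{\matB} to be the corresponding instance. (Because \math{k/r-\epsilon} only grows as \math{r} shrinks, a single \math{\delta} chosen for the largest integer \math{r<k/\epsilon} serves every smaller \math{r} as well, so one instance \math{\matB} defeats the whole range of sparsities at once.)

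Next I would fix an arbitrary encoder \math{\matH} of combined sparsity \math{r} together with an arbitrary decoder \math{\matG}. Here the relevant notion is \emph{combined} sparsity—the number of nonzero rows of \math{\matH}—which is exactly the row-support size of the encoder our algorithm produces (all \math{k} columns share the same \math{r} coordinates); this is the reading of ``\math{r}-sparse'' under which the lower bound matches the upper bound. Lemma~\ref{lem:invbb} then writes \math{\matB\matH\matG=\matC\matY} for some \math{r} columns \math{\matC} of \math{\matB} and some \math{\matY\in\R^{r\times d}}. Chaining \r{eq:lower2} and \r{eq:lower1} (first \math{\norm{\matB-\matC\matY}_{\mathrm{F}}^2\ge\norm{\matB-\matC\matC^\dagger\matB}_{\mathrm{F}}^2}, then the CSSP bound) yields
\mand{
\norm{\matB-\matB\matH\matG}_{\mathrm{F}}^2
\ge\left(1+\tfrac{k}{r}-\delta\right)\norm{\matB-\matB_k}_{\mathrm{F}}^2
>(1+\epsilon)\norm{\matB-\matB_k}_{\mathrm{F}}^2,
}
the final strict inequality being exactly the payoff of having chosen \math{\delta<k/r-\epsilon}.

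Finally, since this bound holds for \emph{every} decoder \math{\matG}, minimizing over \math{\matG} shows that the information loss \math{\ell(\matH,\matB)=\min_{\matG}\norm{\matB-\matB\matH\matG}_{\mathrm{F}}^2} is itself strictly greater than \math{(1+\epsilon)\norm{\matB-\matB_k}_{\mathrm{F}}^2}. As \math{\matH} was an arbitrary encoder of combined sparsity \math{r<k/\epsilon}, this proves the theorem with \math{\matX=\matB}.

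I do not expect a genuine obstacle inside this argument: it is an assembly of Lemma~\ref{lem:invbb}, \r{eq:lower2}, and \r{eq:lower1}, all in hand. The real difficulty is externalized into the cited CSSP lower bound \r{eq:lower1}, which both supplies \math{\matB} and guarantees \math{\delta} can be driven below \math{k/r-\epsilon}. The only points demanding care are bookkeeping ones: fixing \math{\delta} (hence \math{\matB}) in terms of \math{k,\epsilon,r} \emph{before} quantifying over encoders, and being explicit that ``\math{r}-sparse'' here means combined sparsity \math{r} so that Lemma~\ref{lem:invbb} applies with parameter \math{r} rather than \math{rk}.
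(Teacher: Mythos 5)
Your proposal is correct and follows essentially the same route as the paper: take the hard instance $\matB$ from the cited CSSP lower bound \r{eq:lower1}, convert any combined-sparsity-$r$ encoder/decoder pair into $r$ columns via Lemma~\ref{lem:invbb}, and chain \r{eq:lower2} with \r{eq:lower1} to force the loss above $(1+\epsilon)\norm{\matB-\matB_k}_{\mathrm{F}}^2$. Your handling of $\delta$ (fixing $0<\delta<k/r-\epsilon$ before quantifying over encoders) is in fact slightly more careful than the paper's ``it follows that $\varepsilon\ge k/r$,'' and your reading of ``$r$-sparse'' as combined sparsity matches the paper's intended interpretation.
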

This theorem holds
for general linear auto-encoders, and so the lower bound also applies
to the symmetric auto-encoder \math{\matH\matH^\dagger}, the traditional
formulation of sparse PCA. For the case 
\math{k=1}, any \math{r}-sparse
unit norm \math{\vv},
\math{\norm{\matB-\matB\vv\vv\transp}_{\mathrm{F}}^2
\ge(1+\frac1r)\norm{\matB-\matB_1}_{\mathrm{F}}^2}, or the explained variance
(the variance in the residual) is \math{\vv\transp\matB\transp\matB\vv}
which 
is upper-bounded by
\mand{
\vv\transp\matB\transp\matB\vv\le \FNormS{\matB_1}-\frac{1}{r}
\FNormS{\matB-\matB_1}.}
Our algorithm achieves an explained variance which is 
\math{\FNormS{\matB_1}-\Omega(\frac{1}{r})
\FNormS{\matB-\matB_1}}.
Note that with respect to explained variance, the input
matrix 
\math{\matA=[\bm1,\bm1,\ldots,\bm1]} immediately gives an upper bound on the
approximation ratio of \math{(1-r/d)} for the top \math{r}-sparse PCA, since
every subset of \math{r} columns of \math{\matA} has spectral norm
\math{r} whereas the spectral norm of \math{\matA} is \math{d}.

\subsection{Iterative Sparse Linear Encoders}
Our previous result is strong in the sense that every vector (column) in the linear
encoder is \math{r}-sparse with 
non-zero loadings on the \emph{same} set of \math{r}
original feature dimensions.
From our lower bound, we know that the combined sparsity
of \math{\matH} 
must be at least \math{k/\varepsilon} to achieve reconstruction
error \math{(1+\varepsilon)\norm{\matX-\matX_k}} (in the worst case),
so our
algorithm from Section~\ref{sec:encoders}
is optimal in that sense. The algorithm is a ``batch'' algorithm in the sense
that, given \math{k}, it constructs all the \math{k} factors
in \math{\matH} simultaneously. We will refer to this algorithm as the
``batch algorithm''.
The batch algorithm may have non-zero loadings on 
all the \math{r} non-zero rows for every feature vector (column 
\math{\hh_i} of
\math{\matH}). 
Further, the batch algorithm does not
distinguish between the \math{k}-factors. That is, there is no 
top component, second component, and so on.

The traditional techniques for sparse PCA construct the factors iteratively.
We can run our batch algorithm in an iterative mode, where in each step we
set \math{k=1} and compute a sparse factor for a residual matrix.
By constructing our 
\math{k}
features iteratively (and adaptively), we identify an ordering
among the \math{k} features. Further, we might be
able to get each feature sparser while still maintaining a bound on
the
combined sparsity. The problem is that a straightforward
reduction of the iterative algorithm to CSSP is not possible. So 
we need to develop a new iterative version of the algorithm for which we 
can prove an approximation guarantee.
The iterative algorithm constructs each factor in the encoder 
sequentially, based on the
residual from the reconstruction using the previously constructed factors.
One advantage of the iterative algorithm is that it can control the
sparsity of each factor independently to achieve the desired approximation 
bound. Recall that the encoder \math{\matH=[\hh_1,\ldots,\hh_k]} is
\math{(r_1,\ldots,r_k)}-sparse if \math{\norm{\hh_i}_0\le r_i}.
The iterative algorithm is summarized below.
\begin{center}
\fbox{
\begin{minipage}{0.8\textwidth}
\underline{{\bf Iterative Sparse Linear Encoder Algorithm}}
\\[0.1in]
{\bf Inputs:} 
\math{\matX\in\R^{n\times d}}; target rank
\math{k\le\rank(\matX)}; sparsity parameters \math{r_1,\ldots,r_k}.

{\bf Output:} \math{(r_1,\ldots,r_k)}-sparse linear encoder 
\begin{algorithmic}[1]
\STATE Set the residual \math{\Delta=\matX} and \math{\matH=[\ ]}.
\FOR{\math{i=1} to \math{k}}
\STATE Compute an encoder \math{\hh} for \math{\Delta} using the batch algorithm
with \math{k=1} and sparsity parameter \math{r=r_i}.
\STATE Update the encoder by adding \math{\hh} to it:
\math{\matH\gets[\matH,\hh]}.
\STATE Update the residual \math{\Delta}: \math{\Delta\gets
\matX-\matX\matH(\matX\matH)^\dagger\matX}.
\ENDFOR
\STATE Return the \math{(r_1,\ldots,r_k)}-sparse encoder 
\math{\matH\in\R^{n\times k}}.
\end{algorithmic}
\end{minipage}
}
\end{center}
The main iterative step in the algorithm is occuring in steps 3,4 above where
we augment \math{\matH} by computing a top sparse encoder for the
residual obtained from the current \math{\matH}. 
The next lemma bounds the reconstruction error for this iterative step
in the algorithm.
\begin{lemma}\label{lem:iterative}
Suppose, for \math{k\ge1},
 \math{\matH_k=[\hh_1,\ldots,\hh_k]} is an encoder for 
\math{\matX}, satisfying
\mand{\norm{\matX-\matX\matH_k(\matX\matH_k)^\dagger\matX}_{\mathrm{F}}^2
= {\sf err}.} 
Given a sparsity \math{r>5}, one can compute in time
\math{O(ndr+(n+d)r^2)} 
an \math{r}-sparse feature vector 
\math{\hh_{k+1}} such that for the encoder 
\math{\matH_{k+1}=[\hh_1,\ldots,\hh_k,\hh_{k+1}]}, the reconstruction error
satisfies
\mand{\Exp\left[\norm{\matX-\matX\matH_{k+1}(\matX\matH_{k+1})^\dagger\matX}_{\mathrm{F}}^2
\right]
= (1+\delta)({\sf err}-
\norm{\matX-\matX\matH_{k}(\matX\matH_{k})^\dagger\matX}_2^2),
}
where \math{\delta\le 5/(r-5)}.
\end{lemma}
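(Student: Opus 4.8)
The plan is to reduce one step of the iterative scheme to the batch algorithm of Theorem~\ref{theorem:SPCA} run with target rank $1$ on the \emph{current residual}, and then to show that appending the resulting sparse vector to the encoder for \math{\matX} incurs exactly the information loss that this vector incurs on the residual. Concretely, set \math{\matP_k=\matX\matH_k(\matX\matH_k)^\dagger} (the orthogonal projector onto \math{\col(\matX\matH_k)}) and let \math{\Delta=\matX-\matP_k\matX=(\matI-\matP_k)\matX} be the current residual, so that \math{{\sf err}=\FNormS{\Delta}}. I would define \math{\hh_{k+1}} to be the single sparse factor produced by the batch algorithm applied to \math{\Delta} with \math{k=1} and sparsity \math{r}; by Lemma~\ref{lem:r-sparse} this \math{\hh_{k+1}} is \math{r}-sparse, and its cost is the \math{O(ndr+(n+d)r^2)} running time of the batch algorithm at \math{k=1}.

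The technical heart is the (deterministic) identity
\[
\ell(\matH_{k+1},\matX)=\ell(\hh_{k+1},\Delta)=\FNormS{\Delta-\Delta\hh_{k+1}(\Delta\hh_{k+1})^\dagger\Delta},
\]
which I would establish by a column-space argument. Writing \math{\matX\hh_{k+1}=\matP_k\matX\hh_{k+1}+\Delta\hh_{k+1}}, the first term already lies in \math{\col(\matX\matH_k)}, so \math{\col(\matX\matH_{k+1})=\col(\matX\matH_k)\oplus\col(\Delta\hh_{k+1})}, and this sum is \emph{orthogonal} because \math{\Delta\hh_{k+1}=(\matI-\matP_k)\matX\hh_{k+1}} lies in the orthogonal complement of \math{\col(\matX\matH_k)}. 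Hence the projector onto \math{\col(\matX\matH_{k+1})} splits as \math{\matP_{k+1}=\matP_k+\matQ}, where \math{\matQ=\Delta\hh_{k+1}(\Delta\hh_{k+1})^\dagger} is the rank-one projector onto \math{\col(\Delta\hh_{k+1})}. Using \math{\matX=\matP_k\matX+\Delta} together with \math{\matQ\matP_k\matX=\bm0} (every column of \math{\matP_k\matX} is orthogonal to \math{\Delta\hh_{k+1}}), one gets \math{\matX-\matP_{k+1}\matX=\Delta-\matQ\Delta=\Delta-\Delta\hh_{k+1}(\Delta\hh_{k+1})^\dagger\Delta}, which is precisely the residual of the optimally decoded rank-one encoder \math{\hh_{k+1}} acting on \math{\Delta}. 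Taking squared Frobenius norms yields the identity; it even holds in the degenerate case \math{\Delta\hh_{k+1}=\bm0}, where both sides equal \math{\FNormS{\Delta}}.

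Given the identity, the bound is immediate. Theorem~\ref{theorem:SPCA} with \math{k=1} and \math{r>5} guarantees
\[
\Exp\left[\ell(\hh_{k+1},\Delta)\right]\le\left(1+\frac{5}{r-5}\right)\FNormS{\Delta-\Delta_1},
\]
where \math{\Delta_1} is the best rank-one approximation of \math{\Delta}. Since \math{\FNormS{\Delta-\Delta_1}=\FNormS{\Delta}-\sigma_1(\Delta)^2=\FNormS{\Delta}-\TNormS{\Delta}}, with \math{\FNormS{\Delta}={\sf err}} and \math{\TNormS{\Delta}=\TNormS{\matX-\matX\matH_k(\matX\matH_k)^\dagger\matX}}, taking expectations in the identity (which holds for every realization of \math{\hh_{k+1}}) gives \math{\Exp[\ell(\matH_{k+1},\matX)]\le(1+\delta)({\sf err}-\TNormS{\matX-\matX\matH_k(\matX\matH_k)^\dagger\matX})} with \math{\delta\le 5/(r-5)} (and \math{\delta\ge 0}, since the best rank-one approximation is a lower bound), which is the claimed statement.

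The main obstacle is the orthogonal-splitting step: one must verify that appending \math{\hh_{k+1}} enlarges \math{\col(\matX\matH_k)} by exactly the one-dimensional orthogonal direction \math{\Delta\hh_{k+1}}, and that the projectors therefore add. The facts to check carefully are \math{\Delta\hh_{k+1}\perp\col(\matX\matH_k)} and \math{\matQ\matP_k=\bm0}; after that the reduction to Theorem~\ref{theorem:SPCA} on \math{\Delta} is routine. A point worth flagging is that \math{\hh_{k+1}} is computed for \math{\Delta}, not for \math{\matX}, and the identity is exactly what certifies that this residual-based construction is the correct column to append to the encoder of \math{\matX}.
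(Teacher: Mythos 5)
Your proposal is correct, and it shares the paper's core reduction: run the batch algorithm with target rank $1$ on the current residual $\Delta=\matX-\matX\matH_k(\matX\matH_k)^\dagger\matX$ and invoke its $(1+5/(r-5))$-guarantee against $\FNormS{\Delta-\Delta_1}={\sf err}-\TNormS{\Delta}$. Where you diverge is in how the guarantee on $\Delta$ is transferred back to $\matX$. The paper does this by exhibiting an explicit (not necessarily optimal) decoder for $\matH_{k+1}$, namely $\matG_{k+1}=[\matG_k-\gg_{k+1}\hh_{k+1}\transp\matG_k,\ \gg_{k+1}]$ with $\matG_k=(\matX\matH_k)^\dagger\matX$, verifying algebraically that $\matX-\matX\matH_{k+1}\matG_{k+1}=\Delta-\Delta\hh_{k+1}\gg_{k+1}\transp$, and then concluding that the optimal decoder can only do better; this yields a one-sided inequality. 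You instead prove the exact identity $\ell(\matH_{k+1},\matX)=\ell(\hh_{k+1},\Delta)$ by splitting $\col(\matX\matH_{k+1})=\col(\matX\matH_k)\oplus\col(\Delta\hh_{k+1})$ orthogonally so that the projectors add, $\matP_{k+1}=\matP_k+\matQ$. Your orthogonality claims check out ($\Delta\hh_{k+1}=(\matI-\matP_k)\matX\hh_{k+1}$ is orthogonal to $\col(\matX\matH_k)$, hence $\matQ\matP_k=\bm0$), and the degenerate case $\Delta\hh_{k+1}=\bm0$ is handled. Your route is slightly stronger and, to my mind, cleaner: it shows that the information loss with the \emph{optimal} decoder is exactly the rank-one information loss on the residual, rather than merely upper-bounded by the loss of one particular decoder. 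Both proofs in fact deliver ``$\le$'' rather than the ``$=$'' written in the lemma statement (the batch guarantee being itself an inequality); your version makes it transparent that the only slack comes from the CSSP step, not from the transfer.
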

\begin{proof}
Let \math{\matG_k=(\matX\matH_k)^\dagger\matX}
and \math{\matB=\matX-\matX\matH_k\matG_k}. We have that
\math{\norm{\matB}_{\mathrm{F}}^2={\sf err}.} Run our batch algorirhm from 
Theorem~\ref{theorem:SPCA} on \math{\matB} 
with \math{k=1} and sparsity parameter 
\math{r} to obtain an \math{r}-sparse encoder
\math{\hh_{k+1}} and corresponding decoder \math{\gg_{k+1}} satisfying
\mand{
\Exp\left[\norm{\matB-\matB\hh_{k+1}\gg_{k+1}}_{\mathrm{F}}^2\right]=
(1+\delta)\norm{\matB-\matB_1}_{\mathrm{F}}^2,}
with \math{\delta\le 5/(r-5)}.
Now,  the RHS is \math{\norm{\matB-\matB_1}_{\mathrm{F}}^2=\norm{B}_{\mathrm{F}}^2-\norm{B}_2^2
={\sf err}-\norm{B}_2^2}. The LHS is
\eqan{
\Exp\left[\norm{\matB-\matB\hh_{k+1}\gg_{k+1}\transp}_{\mathrm{F}}^2\right]
&=&
\Exp\left[\norm{\matX-\matX\matH_k\matG_k-\matX\hh_{k+1}\gg_{k+1}\transp+
\matX\matH_k\matG_k\hh_{k+1}\gg_{k+1}\transp}_{\mathrm{F}}^2\right]\\
&=&
\Exp\left[\norm{\matX-\matX\matH_k(\matG_k-
\matG_k\hh_{k+1}\gg_{k+1}\transp)-\matX\hh_{k+1}\gg_{k+1}\transp}_{\mathrm{F}}^2\right]\\
&=&
\Exp\left[\norm{\matX-\matX\matH_{k+1}\matG_{k+1}}_{\mathrm{F}}^2\right],
}
where \math{\matG_{k+1}=[\matG_k-\gg_{k+1}\hh_{k+1}\transp\matG_{k},\gg_{k+1}]}.
That is, \math{\matH_{k+1}} with the decoder \math{\matG_{k+1}} satisfies the
expected error bound in the lemma, so the optimal decoder cannot do worse 
in expectation.
\end{proof}
We note that the iterative algorithm produces an 
encoder \math{\matH} which is not guaranteed to be orthonormal.
This is not critical for us, since the information loss of the encoder
is based on the minimum possible reconstruction error, and we can 
compute this information loss even if the encoder is not orthonormal.
Observe also that, if so desired, 
it is possible to orthonormalize the columns of
\math{\matH} without changing the combined sparsity.

Lemma~\ref{lem:iterative} gives a bound on the reconstruction
error for an iterative addition of the next sparse encoder vector. 
As an example of
how we apply Lemma~\ref{lem:iterative},
\remove{%
as follows. Suppose that
\math{\matH_k} gives \math{(1+\varepsilon)\norm{\matX-\matX_k}_{\mathrm{F}}^2} 
reconstruction error
and let \math{\hat\matX=\matX\matH_{k+1}(\matX\matH_{k+1})^\dagger\matX} be the
reconstruction from \math{\matH_{k+1}} as in 
Lemma~\ref{lem:iterative}. Then,
setting \math{\delta=\varepsilon} in Lemma~\ref{lem:iterative}, that is
\math{r=5+5/\varepsilon}, we have that
\eqan{\Exp\left[\norm{\matX-\hat\matX}_{\mathrm{F}}^2
\right]
&\le&(1+\varepsilon)\left((1+\varepsilon)\norm{\matX-\matX_k}_{\mathrm{F}}^2-
\norm{\matX-\hat\matX}_2^2\right)\\
&=&
(1+\varepsilon)\left((1+\varepsilon)\norm{\matX-\matX_{k+1}}_{\mathrm{F}}^2
+(1+\varepsilon)\sigma_{k+1}^2-\norm{\matX-\hat\matX}_2^2\right).}
Since \math{\hat\matX} has rank 
at most \math{k}, by the Eckart-Young Theorem, we get a crude bound
\math{
\norm{\matX-\hat\matX}_2^2
\ge
\norm{\matX-\matX_k}_2^2=\sigma_{k+1}^2}.
We therefore have that
\mand{
\Exp\left[\norm{\matX-\hat\matX}_{\mathrm{F}}^2
\right]
\le
(1+\varepsilon)^2\norm{\matX-\matX_{k+1}}_{\mathrm{F}}^2+\varepsilon(1+\varepsilon)
\norm{\matX-\matX_{k}}_2^2.
}
The bound is a relative error bound on the reconstruction error
up to a small additive term \math{\varepsilon(1+\varepsilon)
\norm{\matX-\matX_{k}}_2^2}. \christos{the discussion in this paragraph should go either inside the previous lemma or be a separate lemma. no reason to be a discussion. but this is not the interesting result; i think the interesting result is to show what happens if we start applying this from $\ell = 1:k$. what is the final bound?}
}%
suppose 
the target rank is \math{k=2}. We start by constructing
\math{\hh_1} with sparsity \math{r_1=5+5/\varepsilon}, which gives us
\math{(1+\varepsilon)\norm{\matX-\matX_1}_{\mathrm{F}}^2} reconstruction error.
We now construct \math{\hh_2}, also with sparsity \math{r_2=5+5/\varepsilon}. 
The
final reconstruction error for 
\math{\matH=[\hh_1,\hh_2]} is bounded by 
\mand{
\norm{\matX-\matX\matH(\matX\matH)^\dagger\matX}_{\mathrm{F}}^2\le
(1+\varepsilon)^2\norm{\matX-\matX_{2}}_{\mathrm{F}}^2+\varepsilon(1+\varepsilon)
\norm{\matX-\matX_{1}}_2^2.
}
On the other hand, our batch algorithm uses sparsity
\math{r=10+10/\varepsilon} in each encoder \math{\hh_1,\hh_2} and achieves
reconstruction error \math{(1+\varepsilon)\norm{\matX-\matX_2}_{\mathrm{F}}^2}.
The iterative algorithm uses sparser features, but pays for it a little
in reconstruction error. 
The additive term is small: it
is \math{O(\varepsilon)} and depends on 
\math{\norm{\matX-\matX_{1}}_2^2=\sigma_2^2}, which in practice
is smaller than \math{\norm{\matX-\matX_{2}}_{\mathrm{F}}^2=\sigma_3^2+\cdots+\sigma_d^2}.
In practice, though the theoretical bound 
for the iterative algorithm
is slightly worse than the batch algorithm guarantee, 
the iterative algorithm performs comparably to the batch algorithm, 
it is more flexible and able to generate 
encoder vectors that are 
sparser than the batch algorithm.

Using the iterative algorithm, we can tailor the sparsity of each
encoder vector separately to achieve a desired accuracy. It is algebraically
intense to prove a bound for a general choice of the sparsity parameters
\math{r_1,\ldots,r_k}, so, for simplicity, we prove a bound for a
specific choice of the 
sparsity parameters which slowly increase for each additional
encoder vector. We get the following result.
\begin{theorem}[Adaptive iterative encoder]\label{theorem:mainIterative}
Given \math{\matX\in\R^{n\times d}} of rank \math{\rho} and 
\math{k<\rho}, there is an algorithm to compute encoder vectors 
\math{\hh_1,\hh_2,\ldots,\hh_k} iteratively with each encoder vector
\math{\hh_j} having sparsity \math{r_j=5+\ceil{5j/\varepsilon}} such that
for every \math{\ell=1,\ldots,k}, the encoder 
\math{\matH_\ell=[\hh_1,\hh_2,\ldots,\hh_\ell]} has information loss:
\mld{
\Exp\left[\norm{\matX-\matX\matH_\ell(\matX\matH_\ell)^\dagger\matX}_{\mathrm{F}}^2\right]
\le
(e\ell)^\varepsilon\norm{\matX-\matX_\ell}_{\mathrm{F}}^2
+\varepsilon \ell^{1+\varepsilon}\norm{\matX_\ell-\matX_1}_{\mathrm{F}}^2.\label{eqthm:mainIterative}
}
The running time to compute all the
encoder vectors is \math{O(ndk^2\varepsilon^{-1}+(n+d)k^3\varepsilon^{-2})}.
\end{theorem}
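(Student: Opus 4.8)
The plan is to prove the bound by induction on $\ell$, using Lemma~\ref{lem:iterative} as the single-step engine and choosing the sparsity schedule $r_j = 5 + \lceil 5j/\varepsilon\rceil$ precisely so that the per-step multiplicative factor $(1+\delta_j)$ telescopes into the claimed $(e\ell)^\varepsilon$ factor. First I would unwind the notation: write $E_\ell = \Exp[\norm{\matX-\matX\matH_\ell(\matX\matH_\ell)^\dagger\matX}_\mathrm{F}^2]$ for the information loss after $\ell$ factors, and let $\tau_\ell = \norm{\matX-\matX\matH_\ell(\matX\matH_\ell)^\dagger\matX}_2^2$ be the operator-norm (top singular value squared) of the $\ell$th residual. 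Applying Lemma~\ref{lem:iterative} with $r=r_{\ell+1}$ gives $\delta_{\ell+1}\le 5/(r_{\ell+1}-5) = 5/\lceil 5(\ell+1)/\varepsilon\rceil \le \varepsilon/(\ell+1)$, and the recursion
\begin{equation}
E_{\ell+1} \le (1+\delta_{\ell+1})\bigl(E_\ell - \tau_\ell\bigr) \le \Bigl(1+\tfrac{\varepsilon}{\ell+1}\Bigr)\bigl(E_\ell - \tau_\ell\bigr).
\end{equation}
The key algebraic fact to exploit is that the product of the multiplicative factors is controlled: $\prod_{j=1}^{\ell}\bigl(1+\tfrac{\varepsilon}{j}\bigr)\le \exp\bigl(\varepsilon\sum_{j=1}^{\ell}1/j\bigr)\le \exp(\varepsilon(1+\ln\ell)) = (e\ell)^\varepsilon$, using $1+\delta\le e^\delta$ and the harmonic bound $\sum_{j\le\ell}1/j\le 1+\ln\ell$. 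This is exactly where the $(e\ell)^\varepsilon$ in the statement comes from.

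Next I would control the two terms that the recursion generates. Iterating the recursion and dropping the (nonnegative, and helpful) $-\tau_\ell$ subtractions where convenient, the leading term becomes $\bigl(\prod_{j=1}^{\ell}(1+\delta_j)\bigr)\cdot(\text{base reconstruction error})$. The base case $\ell=1$ is just Theorem~\ref{theorem:SPCA} (or Lemma~\ref{lem:iterative} starting from $\matH_0=[\,]$, $\Delta=\matX$), giving $E_1 \le (1+\varepsilon)\norm{\matX-\matX_1}_\mathrm{F}^2$, which already fits the claimed form since $(e\cdot1)^\varepsilon = e^\varepsilon \ge 1+\varepsilon$ and the additive term vanishes because $\matX_1-\matX_1 = \bm0$. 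To reach the target $\norm{\matX-\matX_\ell}_\mathrm{F}^2$ I need to relate the residual-based recursion to the true optimal rank-$\ell$ error; the mechanism is the identity $\norm{\matX-\matX_j}_\mathrm{F}^2 = \norm{\matX-\matX_{j+1}}_\mathrm{F}^2 + \sigma_{j+1}^2$ together with the Eckart--Young lower bound $\tau_j = \norm{\matX-\matX\matH_j(\cdots)^\dagger\matX}_2^2 \ge \norm{\matX-\matX_j}_2^2 = \sigma_{j+1}^2$, exactly as in the $k=2$ worked example preceding the theorem. The subtracted $\tau_j$ terms absorb the $\sigma_{j+1}^2$ corrections that accumulate when telescoping the Frobenius errors down to $\norm{\matX-\matX_\ell}_\mathrm{F}^2$.

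The main obstacle, and the part deserving the most care, is bookkeeping the additive error term $\varepsilon\,\ell^{1+\varepsilon}\norm{\matX_\ell-\matX_1}_\mathrm{F}^2$. Each application of the recursion leaves behind a residual additive piece of the form $\delta_{\ell+1}\cdot(E_\ell - \tau_\ell)$, and when I substitute $E_\ell - \tau_\ell$ by its spectral decomposition these pieces involve the intermediate singular values $\sigma_2^2,\ldots,\sigma_\ell^2$, whose sum is precisely $\norm{\matX_\ell-\matX_1}_\mathrm{F}^2 = \sum_{j=2}^{\ell}\sigma_j^2$. The delicate step is showing that after multiplying each such contribution by the tail of the telescoping product (at most $(e\ell)^\varepsilon$) and summing the $\delta_j = \Theta(\varepsilon/j)$ weights against the $\ell$ steps, the total is bounded by $\varepsilon\,\ell^{1+\varepsilon}\norm{\matX_\ell-\matX_1}_\mathrm{F}^2$. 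I would handle this by a clean induction: assume~\eqref{eqthm:mainIterative} holds at level $\ell$, feed it through the one-step recursion, and verify both that the multiplicative factor updates correctly (using $(1+\tfrac{\varepsilon}{\ell+1})(e\ell)^\varepsilon \le (e(\ell+1))^\varepsilon$) and that the new additive term stays within $\varepsilon(\ell+1)^{1+\varepsilon}\norm{\matX_{\ell+1}-\matX_1}_\mathrm{F}^2$ after accounting for the freshly exposed $\sigma_{\ell+1}^2$. Finally, the running time is immediate: each of the $k$ iterations runs the batch algorithm with sparsity $r_j = O(k/\varepsilon)$ at cost $O(ndr_j + (n+d)r_j^2)$, and summing $r_j$ and $r_j^2$ over $j=1,\ldots,k$ gives $O(ndk^2\varepsilon^{-1} + (n+d)k^3\varepsilon^{-2})$.
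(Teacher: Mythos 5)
Your skeleton matches the paper's: the same one-step engine (Lemma~\ref{lem:iterative} with $\delta_{j}\le\varepsilon/j$ from the sparsity schedule), the same harmonic-product bound $\prod_j(1+\varepsilon/j)\le(e\ell)^\varepsilon$, the same use of Eckart--Young ($\tau_j\ge\sigma_{j+1}^2$) to telescope $\norm{\matX-\matX_j}_{\mathrm{F}}^2=\norm{\matX-\matX_{j+1}}_{\mathrm{F}}^2+\sigma_{j+1}^2$ down to $\norm{\matX-\matX_\ell}_{\mathrm{F}}^2$, and the same running-time accounting. The one genuine difference is the choice of inductive invariant, and it is exactly where your plan is incomplete. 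You propose to induct directly on the final weakened bound $(e\ell)^\varepsilon\norm{\matX-\matX_\ell}_{\mathrm{F}}^2+\varepsilon\ell^{1+\varepsilon}\norm{\matX_\ell-\matX_1}_{\mathrm{F}}^2$. Pushing that hypothesis through one step of the recursion leaves a freshly exposed term $(1+\frac{\varepsilon}{\ell+1})\bigl[(e\ell)^\varepsilon-1\bigr]\sigma_{\ell+1}^2$ that must be absorbed into the coefficient $\varepsilon(\ell+1)^{1+\varepsilon}$ of $\sigma_{\ell+1}^2$ inside $\norm{\matX_{\ell+1}-\matX_1}_{\mathrm{F}}^2$. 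That inequality, $(1+\frac{\varepsilon}{\ell+1})\bigl[(e\ell)^\varepsilon-1\bigr]\le\varepsilon(\ell+1)^{1+\varepsilon}$, appears to hold for $\varepsilon\in(0,1]$ and all $\ell\ge1$, but it is not automatic (crude bounds like $e^x-1\le xe^x$ fail it for $\ell=1$), and it is precisely the step you defer with ``verify that the new additive term stays within.'' Since that verification is the only nontrivial content of the whole argument, the plan as written has a real unfinished obligation.

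The paper sidesteps this by inducting on a sharper, exact intermediate bound: with $Q_\ell=\prod_{i\le\ell}(1+\varepsilon/i)$ and $P_\ell=Q_\ell-1$, it shows $\Exp[\cdot]\le Q_\ell\norm{\matX-\matX_\ell}_{\mathrm{F}}^2+Q_\ell\sum_{j=2}^{\ell}\sigma_j^2 P_{j-1}/Q_{j-1}$, where the inductive step is clean because $Q_{\ell+1}/Q_\ell=1+\varepsilon/(\ell+1)$ exactly and the coefficient $Q_{\ell+1}P_\ell/Q_\ell$ of $\sigma_{\ell+1}^2$ emerges with no inequality to check. All the weakening is then done once, at the end: a secondary recursion $P_i\le\varepsilon Q_i/Q_1+P_{i-1}$ plus the interpolation $\norm{\matX_\ell-\matX_j}_{\mathrm{F}}^2\le\norm{\matX_\ell-\matX_1}_{\mathrm{F}}^2(\ell-j)/(\ell-1)$ collapse the sum to $\tfrac12\varepsilon\ell^{1+\varepsilon}\norm{\matX_\ell-\matX_1}_{\mathrm{F}}^2$. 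If you want to keep your direct induction, you must actually prove the displayed coefficient inequality (likely by separating small $\ell$ from large $\ell$); otherwise, strengthen the induction hypothesis to the exact $Q_\ell,P_\ell$ form as the paper does.
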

\paragraph{Comments.}
\begin{enumerate}[1.]
\item \math{(e\ell)^{\varepsilon}} 
is a very slowly growing in \math{\ell}. 
For example, for \math{\varepsilon=0.01} and \math{\ell=100},
\math{\ell^{\varepsilon}\approx1.04}.
Asymptotically,  
\math{(e\ell)^{\varepsilon}=1+O(\epsilon\log\ell)}, so up to a small 
additive term, we have a relative error approximation. The message is that
we get a reasonable approximation guarantee for our iterative algorithm,
where no such bounds are available for existing algorithms (which are
all iterative). 

\item Observe that each successive encoder vector has a larger value
of the sparsity parameter \math{r_j}. 
In the batch algorithm, every 
encoder vector has sparsity parameter \math{r=5k+5k/\varepsilon} to get a
reconstruction error of \math{(1+\varepsilon)\norm{\matX-\matX_k}_{\mathrm{F}}^2}. This
means that the number of non-zeros in \math{\matH} is at most
\math{5k^2+5k^2/\varepsilon}. For the iterative encoder, the first 
few encoder vectors are very sparse, getting denser
as you add more encoder vectors until you get to 
the last encoder vector which has
maximum sparsity parameter \math{r_k=5+5k/\varepsilon}. 
The tradeoff is in the reconstruction error.
We can get still sparser in the iterative algorithm, setting every 
encoder vector's sparsity to \math{r_j=5+5k/\varepsilon}. In this case,
the bound on the reconstruction error for \math{\matH_\ell} becomes
\math{(1+\varepsilon)^\ell\norm{\matX-\matX_\ell}_{\mathrm{F}}^2+
\frac12 \varepsilon(1+\varepsilon)^{\ell-1}\ell\norm{\matX_\ell-\matX_1}_{\mathrm{F}}^2},
which is growing as \math{1+O(\varepsilon\ell)} as opposed to 
\math{1+O(\varepsilon\log\ell)}.
One other trade off with the iterative algorithm is that
the combined sparsity (number of non-zero rows) of \math{\matH} 
could increase, as compared to the batch algorithm.
In the batch algorithm, the combined sparsity is \math{O(k/\varepsilon)},
the same as the sparsity parameter of each encoder vector,
since every encoder vector has non-zeros in the same set of rows.
With the iterative algorithm, every encoder vector could conceivably have
non-zeros in different rows giving \math{O(k^2/\varepsilon)} non-zero rows.

\item 
Just as with the PCA vectors \math{\vv_1,\vv_2,\ldots,}, we have an
encoder for any choice of \math{\ell} by taking the first \math{\ell} encoder
vectors \math{\hh_1,\ldots,\hh_\ell}. This is not the case for the 
batch algorithm. If we compute the batch-encoder 
\math{\matH=[\hh_1,\ldots,\hh_k]}, we 
cannot guarantee that the first encoder vector \math{\hh_1} will give
a good reconstruction comparable with \math{\matX_1}.
\end{enumerate}

\begin{proof}(Theorem~\ref{theorem:mainIterative})
For \math{\ell\ge1}, we define two quantities \math{Q_\ell,P_\ell} for that will be useful in the proof.
\eqan{
Q_\ell&=&\textstyle(1+\varepsilon)(1+\frac12\varepsilon)(1+\frac13\varepsilon)(1+\frac14\varepsilon)
\cdots(1+\frac1\ell\varepsilon);\\
P_\ell
&=&\textstyle(1+\varepsilon)(1+\frac12\varepsilon)(1+\frac13\varepsilon)(1+\frac14\varepsilon)
\cdots(1+\frac1\ell\varepsilon)-1
=Q_\ell-1.}
Using Lemma~\ref{lem:iterative} and induction, 
we prove a bound
on the information loss of encoder \math{\matH_\ell}:
\mld{
\Exp\left[\norm{\matX-\matX\matH_\ell(\matX\matH_\ell)^\dagger\matX}_{\mathrm{F}}^2\right]
\le
Q_\ell\norm{\matX-\matX_\ell}_{\mathrm{F}}^2+
Q_\ell\sum_{j=2}^\ell\sigma_j^2\frac{P_{j-1}}{Q_{j-1}}.
\tag{\math{*}}\label{eq:proof1}
}
When \math{\ell=1}, the claim is that
\math{\Exp[\norm{\matX-\matX\matH_1(\matX\matH_1)^\dagger\matX}_{\mathrm{F}}^2]
\le
(1+\varepsilon)\norm{\matX-\matX_1}_{\mathrm{F}}^2} (since the summation is empty),
which is true by construction of \math{\matH_1=[\hh_1]} because
\math{r_1=5+5/\varepsilon}.
Suppose the claim in \r{eq:proof1} holds up to \math{\ell\ge1} and consider
\math{\matH_{\ell+1}=[\matH_\ell,\hh_{\ell+1}]}, where \math{\hh_{\ell+1}} has
sparsity \math{r_{\ell+1}=5+5(\ell+1)/\varepsilon}. We apply 
Lemma~\ref{lem:iterative} with \math{\delta=\varepsilon/(\ell+1)} and 
we condition on
 \math{{\sf err}=\norm{\matX-\matX\matH_\ell(\matX\matH_\ell)^\dagger\matX}_{\mathrm{F}}^2} whose  expectation is given  in \r{eq:proof1}. 
By iterated expectation, we have that
\eqan{
&&\Exp\left[\norm{\matX-\matX\matH_{\ell+1}(\matX\matH_{\ell+1})^\dagger\matX}_{\mathrm{F}}^2
\right]\\
&\buildrel(a)\over=&
\Exp_{\matH_\ell}\Exp_{\hh_{\ell+1}}\left[\norm{\matX-\matX\matH_{\ell+1}(\matX\matH_{\ell+1})^\dagger\matX}_{\mathrm{F}}^2\mid\matH_\ell
\right]\\
&\buildrel(b)\over\le&\left(1+\frac{\varepsilon}{\ell+1}\right)
\Exp_{\matH_\ell}\left[\norm{\matX-\matX\matH_{\ell}(\matX\matH_{\ell})^\dagger\matX}_{\mathrm{F}}^2-\norm{\matX-\matX\matH_{\ell}(\matX\matH_{\ell})^\dagger\matX}_2^2\right]
\\
&\buildrel(c)\over\le&
\frac{Q_{\ell+1}}{Q_{\ell}}\left(Q_\ell\norm{\matX-\matX_\ell}_{\mathrm{F}}^2+
Q_\ell\sum_{j=2}^\ell\sigma_j^2\frac{P_{j-1}}{Q_{j-1}}-
\underbrace{\Exp_{\matH_\ell}\left[\norm{\matX-\matX\matH_{\ell}(\matX\matH_{\ell})^\dagger\matX}_2^2\right]}_{\ge
\sigma_{\ell+1}^2}\right)\\
&\le&
\frac{Q_{\ell+1}}{Q_{\ell}}\left(Q_\ell\norm{\matX-\matX_\ell}_{\mathrm{F}}^2-\sigma_{\ell+1}^2+
Q_\ell\sum_{j=2}^\ell\sigma_j^2\frac{P_{j-1}}{Q_{j-1}}\right)\\
&=&
\frac{Q_{\ell+1}}{Q_{\ell}}\left(Q_\ell\left(\sigma_{\ell+1}^2+\norm{\matX-\matX_{\ell+1}}_{\mathrm{F}}^2\right)
-
\sigma_{\ell+1}^2+
Q_\ell\sum_{j=2}^\ell\sigma_j^2\frac{P_{j-1}}{Q_{j-1}}\right)\\
&=& 
\frac{Q_{\ell+1}}{Q_{\ell}}\left(Q_\ell\norm{\matX-\matX_{\ell+1}}_{\mathrm{F}}^2
+
\sigma_{\ell+1}^2(Q_{\ell}-1)+
Q_\ell\sum_{j=2}^\ell\sigma_j^2\frac{P_{j-1}}{Q_{j-1}}\right)\\
&\buildrel(d)\over=&
Q_{\ell+1}\norm{\matX-\matX_{\ell+1}}_{\mathrm{F}}^2
+
\sigma_{\ell+1}^2\frac{Q_{\ell+1}P_\ell}{Q_{\ell}}+
Q_{\ell+1}\sum_{j=2}^\ell\sigma_j^2\frac{P_{j-1}}{Q_{j-1}}\\
&=&
Q_{\ell+1}\norm{\matX-\matX_{\ell+1}}_{\mathrm{F}}^2
+
Q_{\ell+1}\sum_{j=2}^{\ell+1}\sigma_j^2\frac{P_{j-1}}{Q_{j-1}}.
}
In (a) we used iterated expectation. In (b) we used Lemma~\ref{lem:iterative}
to take the expectation over \math{\hh_{\ell+1}}. In (c), we used
the definition of \math{Q_\ell} from which
\math{Q_{\ell+1}/Q_\ell=(1+\varepsilon/(\ell+1))}, and the induction hypothesis
to bound 
\math{\Exp_{\matH_\ell}[\norm{\matX-
\matX\matH_{\ell}(\matX\matH_{\ell})^\dagger\matX}_{\mathrm{F}}^2]}. We also observed that,
since \math{\matX\matH_{\ell}(\matX\matH_{\ell})^\dagger\matX} is a rank-\math{k}
approximation to \math{\matX}, it follows from the Eckart-Young theorem that
\math{\norm{\matX-\matX\matH_{\ell}(\matX\matH_{\ell})^\dagger\matX}_2^2\ge\sigma_{\ell+1}^2} for \emph{any} \math{\matH_\ell}, and hence this inequality also holds
in expectation.
In (d) we used the definition \math{P_\ell=Q_\ell-1}.
The bound \r{eq:proof1} now follows by induction for \math{\ell\ge1}.
The first term in the bound \r{eqthm:mainIterative} follows by bounding
\math{Q_{\ell}} using elementary calculus:
\mand{
\log Q_\ell
=\sum_{i=1}^\ell\log\left(1+\frac{\varepsilon}{i}\right)\\
\le\sum_{i=1}^\ell\frac{\varepsilon}{i}\\
\le\varepsilon\log(e\ell),
}
where we used \math{\log(1+x)\le x} for \math{x\ge0} and the well known
upper bound \math{\log(e\ell)} for the \math{\ell}th harmonic number
\math{1+\frac12+\frac13+\cdots+\frac1\ell}. Thus,
\math{Q_\ell\le(e\ell)^\varepsilon}. The rest of the proof is to bound the second
term in \r{eq:proof1} to obtain the second term in 
\r{eqthm:mainIterative}. Obeserve that for \math{i\ge1},
\mand{
P_i=Q_i-1=\varepsilon\frac{Q_i}{Q_1}+\frac{Q_i}{Q_1}-1
\le\varepsilon\frac{Q_i}{Q_1}+Q_{i-1}-1=\varepsilon\frac{Q_i}{Q_1}+P_{i-1},
}
where we used \math{{Q_i}/{Q_1}\le Q_{i-1}} and we define 
\math{P_{0}=0}.
Therefore, 
\eqar{
\sum_{j=2}^\ell\sigma_j^2\frac{P_{j-1}}{Q_{j-1}}
&\le&
\frac{\varepsilon}{Q_1}\sum_{j=2}^\ell\sigma_j^2
+\sum_{j=3}^\ell\sigma_j^2\frac{P_{j-2}}{Q_{j-1}}\nonumber\\
&=&
\frac{\varepsilon}{Q_1}\sum_{j=2}^\ell\sigma_j^2
+\sum_{j=3}^\ell\sigma_j^2\frac{P_{j-2}}{Q_{j-2}}\cdot\frac{Q_{j-2}}{Q_{j-1}}
\nonumber\\
&\buildrel(a)\over\le&
\frac{\varepsilon}{Q_1}\sum_{j=2}^\ell\sigma_j^2
+\sum_{j=3}^\ell\sigma_j^2\frac{P_{j-2}}{Q_{j-2}},\nonumber\\
&=&
\frac{\varepsilon}{Q_1}\norm{\matX_\ell-\matX_1}_{\mathrm{F}}^2
+\sum_{j=3}^\ell\sigma_j^2\frac{P_{j-2}}{Q_{j-2}}.\nonumber
}
In (a) we used  \math{{Q_{j-2}}/{Q_{j-1}}<1}. The previous derivation
gives a reduction from which 
it is now
an elementary task to prove by induction that
\mand{
 \sum_{j=2}^\ell\sigma_j^2\frac{P_{j-1}}{Q_{j-1}}
\le
\frac{\varepsilon}{Q_1}\sum_{j=1}^{\ell-1}\norm{\matX_\ell-\matX_j}_{\mathrm{F}}^2.
}
Since \math{\norm{\matX_\ell-\matX_j}_{\mathrm{F}}^2\le\norm{\matX_\ell-\matX_1}_{\mathrm{F}}^2(\ell-j)/(\ell-1)}, we have that
\mand{
 \sum_{j=2}^\ell\sigma_j^2\frac{P_{j-1}}{Q_{j-1}}
\le
\frac{\varepsilon\norm{\matX_\ell-\matX_1}_{\mathrm{F}}^2}{Q_1(\ell-1)}\sum_{j=1}^{\ell-1}\ell-j
=
\frac{\varepsilon\ell\norm{\matX_\ell-\matX_1}_{\mathrm{F}}^2}{2Q_1}.
}
Using \r{eq:proof1}, we have that
\mand{
\norm{\matX-\matX\matH_\ell(\matX\matH_\ell)^\dagger\matX}_{\mathrm{F}}^2
\le
(e\ell)^\varepsilon\norm{\matX-\matX_\ell}_{\mathrm{F}}^2+
\frac{\varepsilon\ell\norm{\matX_\ell-\matX_1}_{\mathrm{F}}^2}{2}\cdot\frac{Q_\ell}{Q_1}.
}
The result finally follows because
\mand{
\log\frac{Q_\ell}{Q_1}
=\sum_{i=2}^\ell\log\left(1+\frac{\varepsilon}{i}\right)
\le\varepsilon\sum_{i=2}^\ell\frac1i\le\varepsilon(\log(e\ell)-1)=\varepsilon\log\ell,
}
and so \math{Q_\ell/Q_1\le\ell^{\varepsilon}}.
\end{proof}

\section{Experiments}
We compare the empirical performance of our algorithms 
with 
some of the existing state-of-the-art 
sparse PCA methods. 
The inputs are 
$\matX \in \R^{n \times d}$, the number of components
$k$
and the sparsity parameter $r$.
The output
is the sparse
encoder \math{\matH=[\hh_1, \hh_2, \dots, \hh_k] \in \R^{n \times k}} 
with 
$\norm{\hh_i}_0\le r$; 
\math{\matH} is used to project \math{\matX} onto some
subspace to obtain a reconstruction \math{\hat\matX} which decomposes
the variance into two terms:
\eqan{
\FNormS{\matX}
&=&
\FNormS{\matX-\hat\matX}+\FNormS{\hat\matX}\\
&=&\text{Reconstruction Error} + \text{Explained Variance}
}
Previous methods construct the matrix $\matH$ 
to have orthonormal columns and restristed the resonstruction to symmetric
auto-encoders,  $\hat\matX=\FNormS{\matX \matH \matH^\dagger}$.
Minimizing the reconstruction error is equivalent to 
maximizing the explained variance, so one metric that we consider
is the (normalized)
\emph{explained variance of the symmetric auto-encoder},
$$
\text{Symmetric Explained Variance}=
\frac{\FNormS{\matX \matH \matH^\dagger}}{\FNormS{\matX_k}}\le 1
$$
To capture how informative
the 
sparse components are, we use the normalized
information loss:
$$
\text{Information Loss}=
\frac{\FNormS{\matX - \matX \matH (\matX \matH)^\dagger\matX}}{\FNormS{\matX
-\matX_k}}\ge 1.
$$
The true explained variance when using the optimal decoder 
will be larger than the symmetric explained variance because
$\FNormS{\matX - \matX \matH \matH^\dagger} 
\ge \FNormS{\matX - \matX \matH (\matX \matH)^\dagger\matX}$. We report
the symmetric explained variance primarily for 
historical reasons because existing sparse PCA methods have constructed
auto-encoders to optimize the symmetric explained variance rather than the
true explained variance.
\remove{
such that it explains as much of the variance of 
$\matX$ as possible, i.e., $\FNormS{\matX \matH \matH^\dagger}$ is as large as possible. From the matrix Pythagoras theorem we have: 
$\FNormS{\matX \matH \matH^\dagger} = \FNormS{\matX} - \FNormS{\matX - \matX \matH \matH^\dagger},$ hence maximizing the explained variance 
$\FNormS{\matX \matH \matH^\dagger}$ is equivalent to minimizing the reconstruction error 
$\FNormS{\matX - \matX \matH \matH^\dagger}$. Notice, however, that we also have
$\FNormS{\matX - \matX \matH \matH^\dagger} \ge \FNormS{\matX - \matX \matH (\matX \matH)^\dagger\matX}$. 
On the other hand our sparse auto-encoder methods are designed to directly minimize the reconstruction error 
$\FNormS{\matX - \matX \matH (\matX \matH)^\dagger\matX}$. 
Those objectives are approximately equivalent, hence to compare our algorithms with previous sparse PCA methods we report results on both of them. The reconstruction error objective is exactly the auto-encoder framework we have discussed in the paper, applied to a symmetric PSD matrix. 
Let $\matU_k \in \R^{n \times k}$ contain the eigenvectors of $\matX$ corresponding to the $k$ largest eigenvalues of $\matX$. 
Notice that from standard linear algebra arguments we have that
$$
\matU_k = \arg\max_{\matZ \in \R^{n \times k} }\FNormS{\matX \matZ \matZ^\dagger}; \hspace{.5in}
\matX_k = \matX \matU_k \matU_k\transp = \arg\min_{\matB \in \R^{n \times n}, \rank(\matB) \le k} \FNormS{\matX - \matB}
$$ 
In our experiments, we report the
$ReconstructionError = \FNormS{\matX - \matX\matH (\matX \matH)^\dagger\matX} / \FNormS{\matX - \matX \matU_k \matU_k\transp} \ge 1$, and the $ExplainedVariance = \FNormS{\matX\matH \matH^\dagger} / \FNormS{\matX \matU_k \matU_k\transp} \le 1$ - both are the 	``normalized'' values. 
}

\begin{figure*}[t]
\begin{center}
\begin{tabular}{ccc}
\resizebox{0.3\textwidth}{!}{\includegraphics*{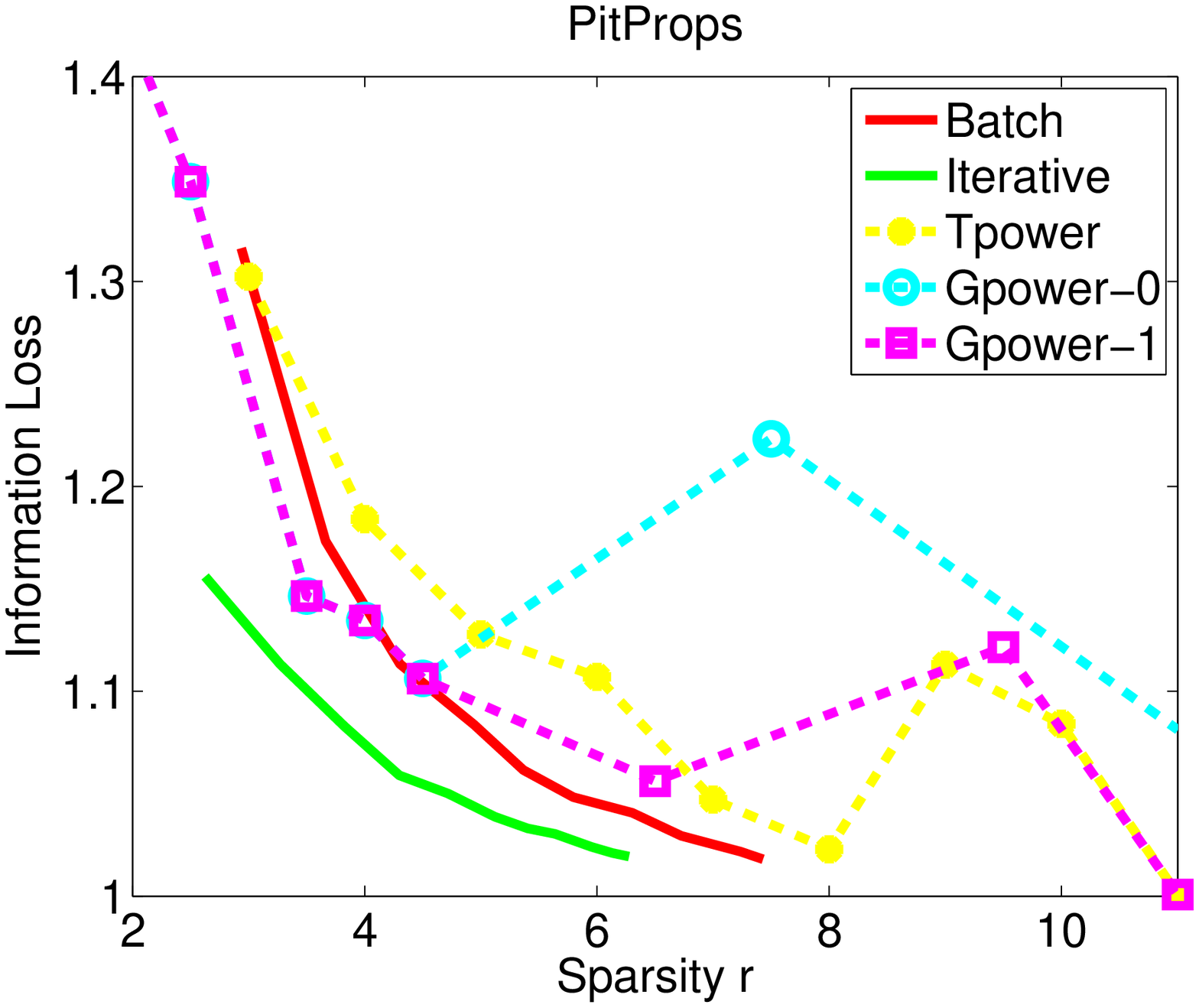}}
&
\resizebox{0.3\textwidth}{!}{\includegraphics*{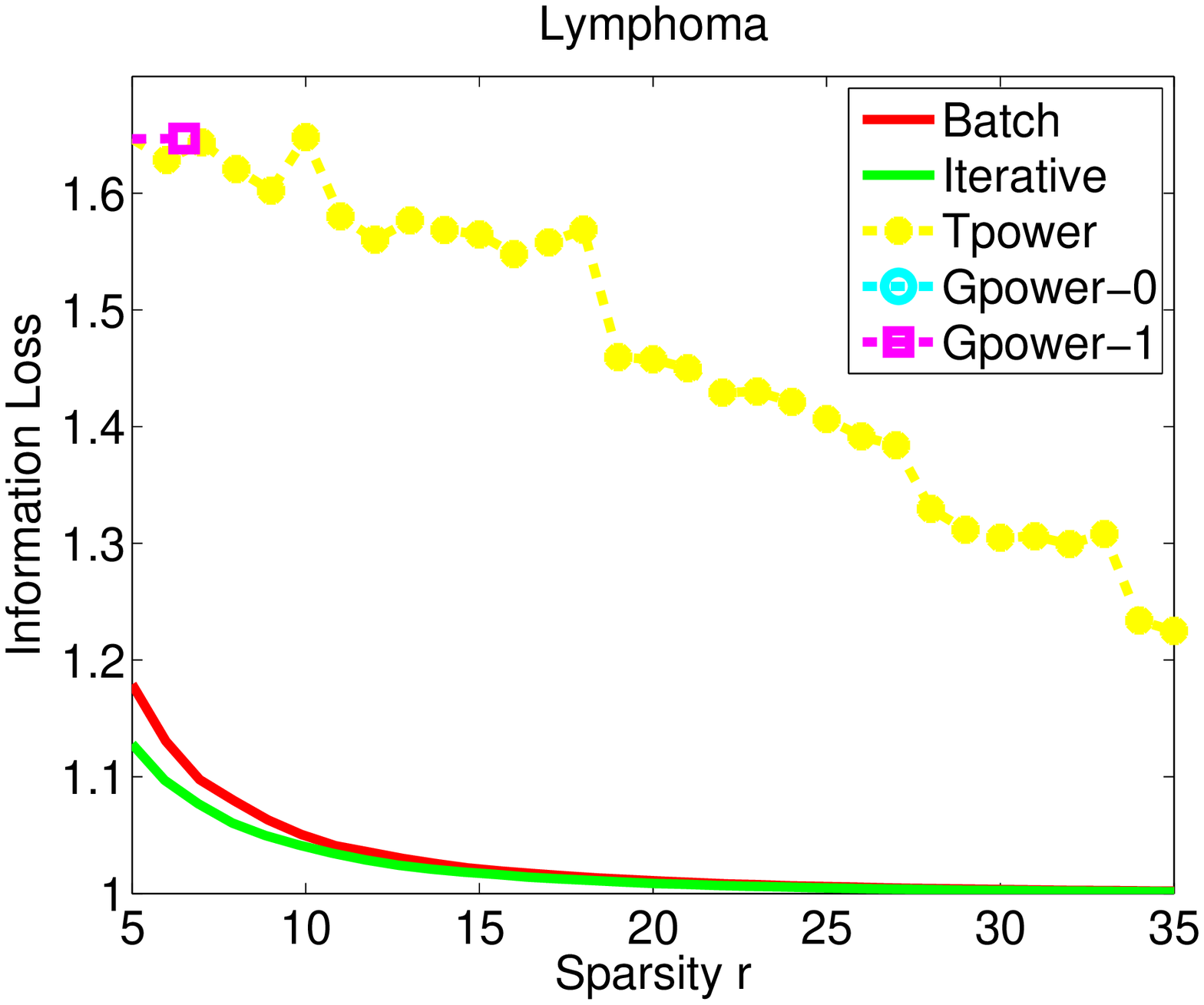}}
&
\resizebox{0.3\textwidth}{!}{\includegraphics*{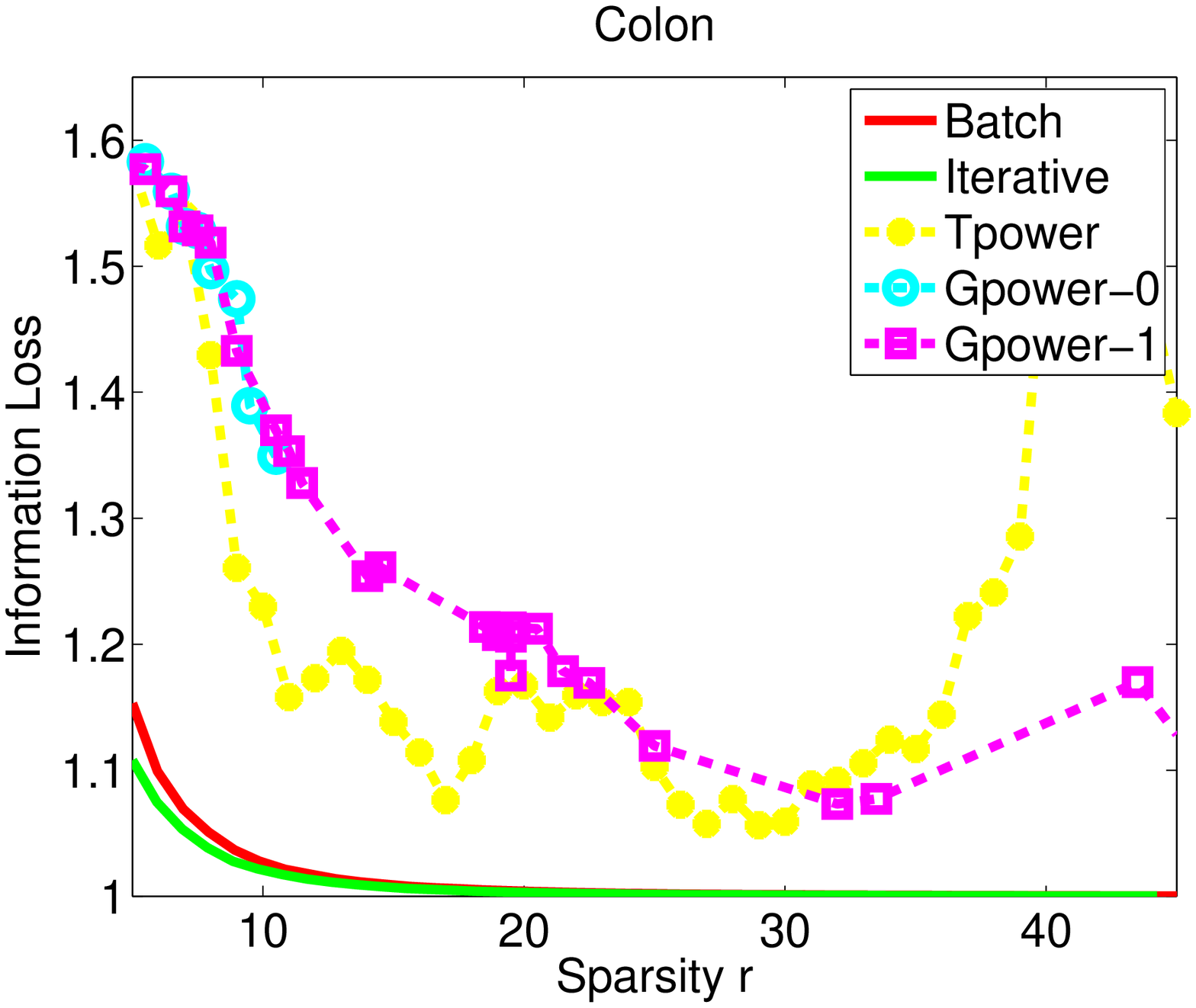}}
\\
\resizebox{0.3\textwidth}{!}{\includegraphics*{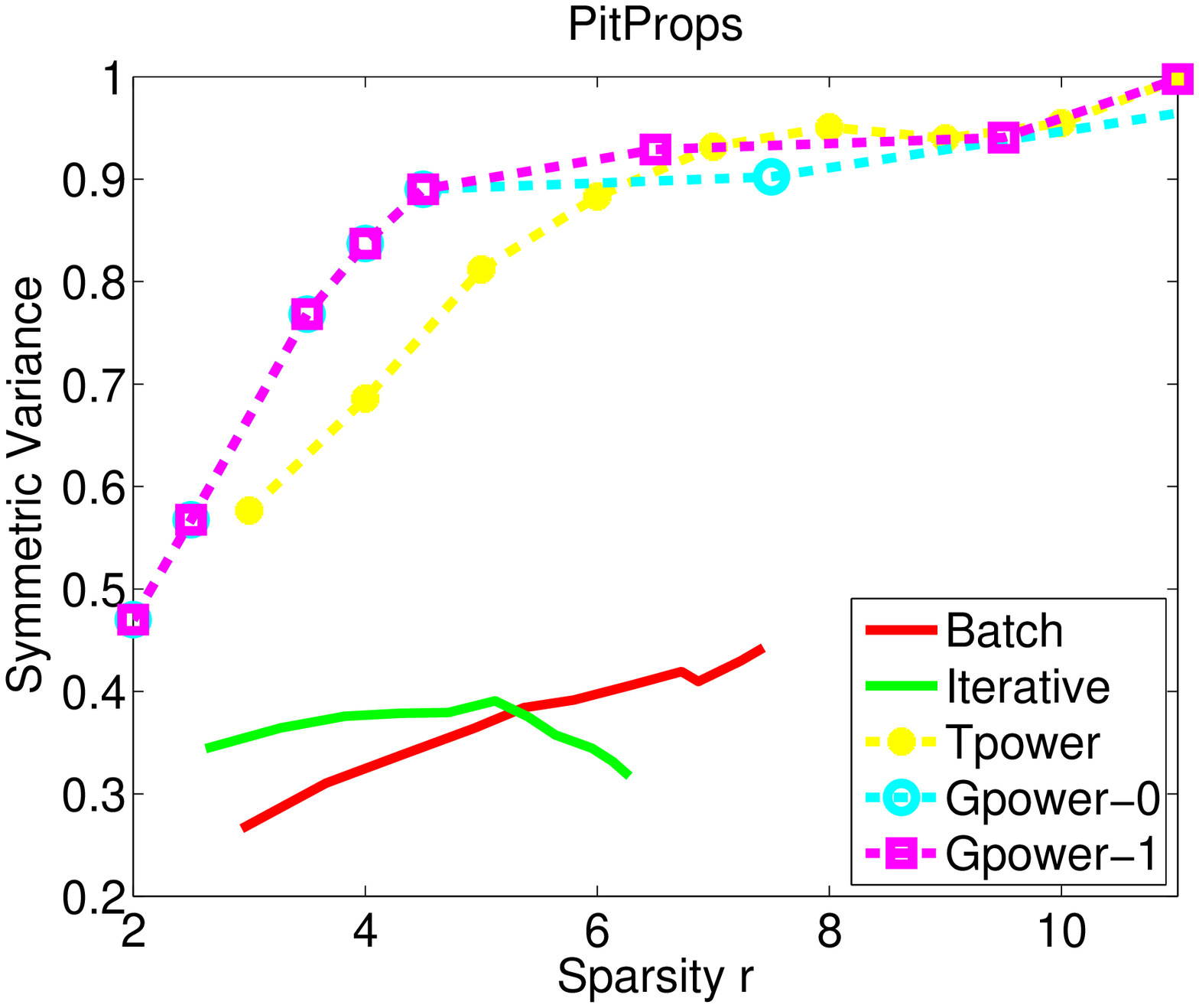}}
&
\resizebox{0.3\textwidth}{!}{\includegraphics*{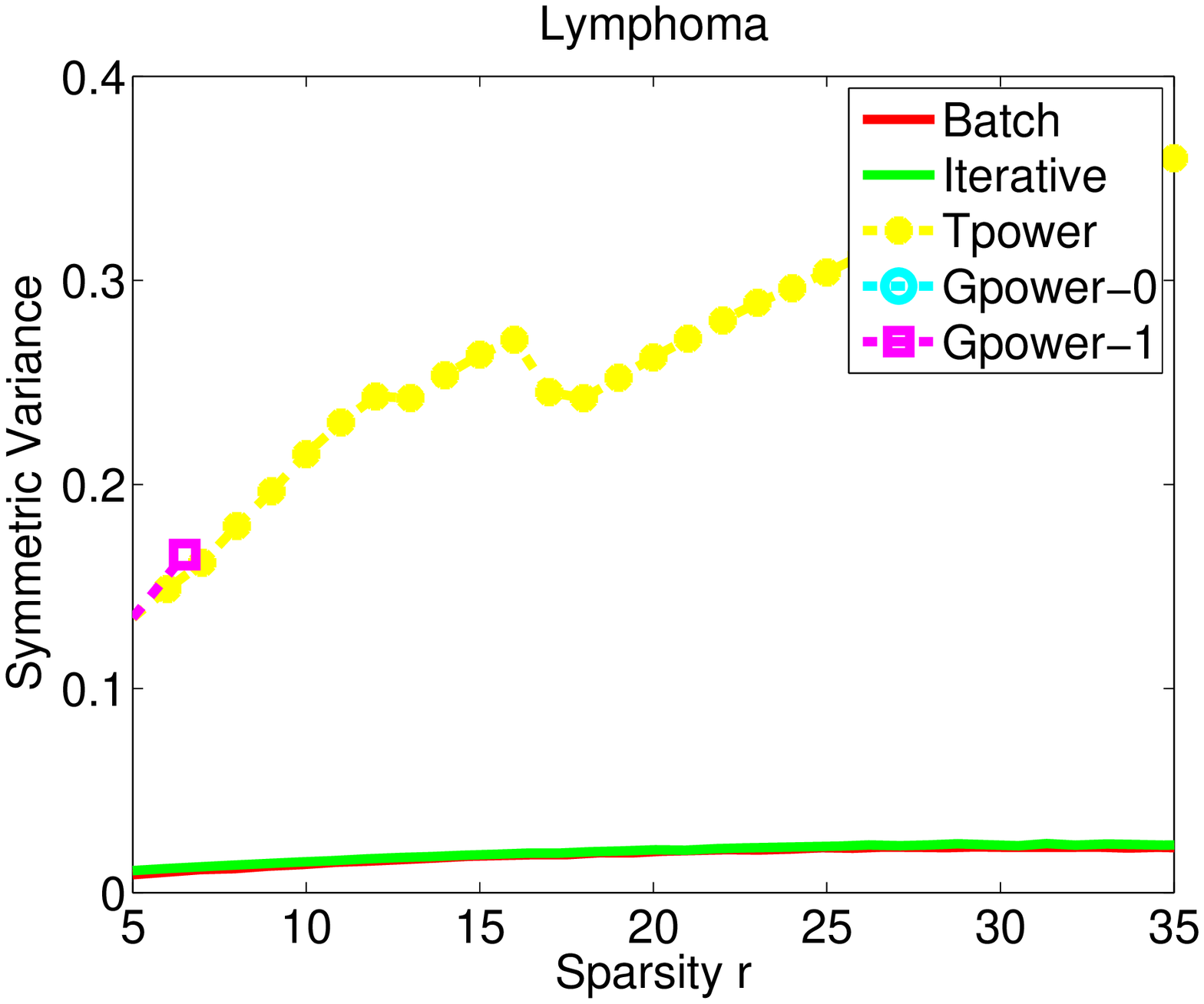}}
&
\resizebox{0.3\textwidth}{!}{\includegraphics*{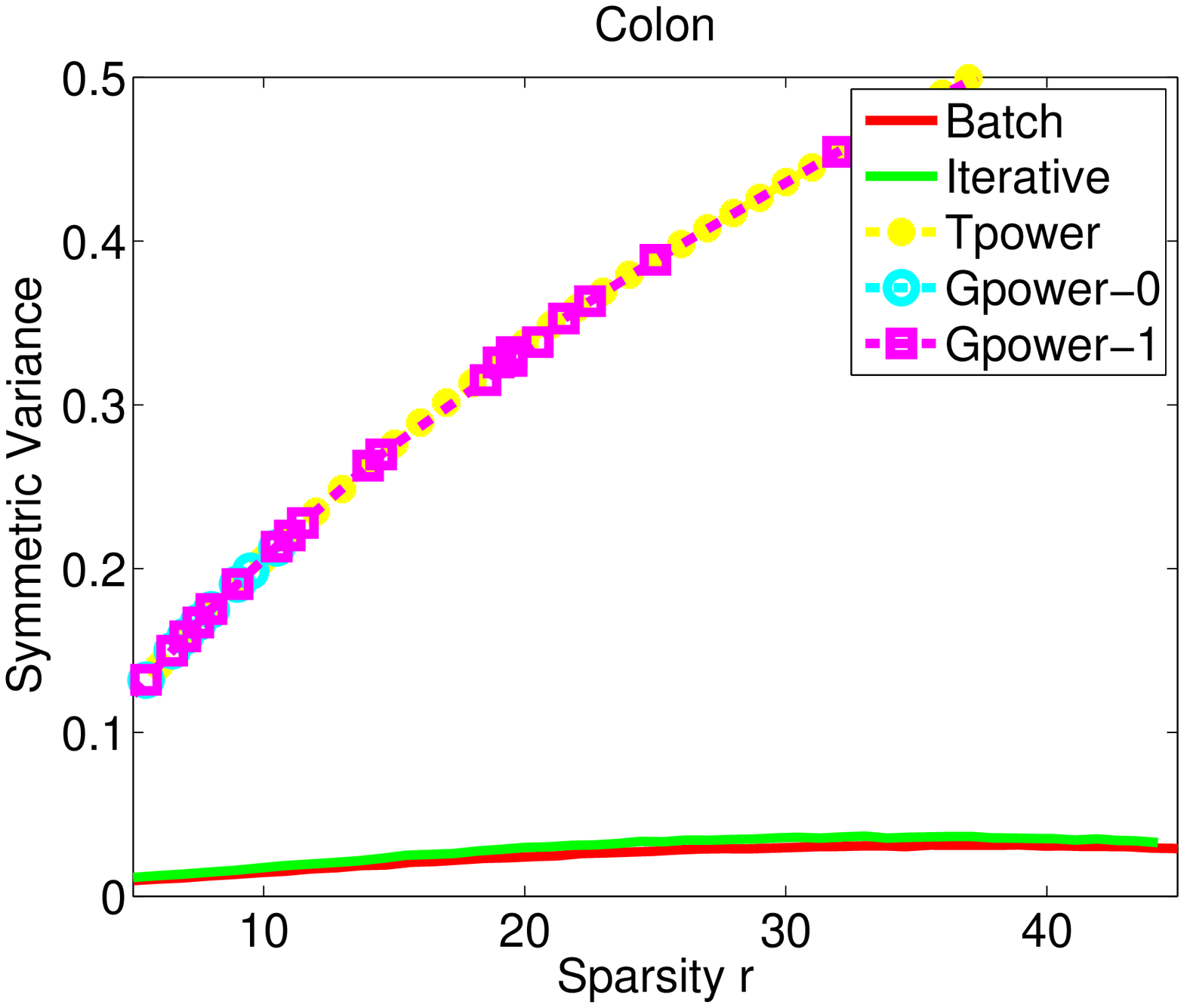}}
\end{tabular}
\end{center}
\caption{Performance of the sparse encoder algorithms
on the PitProps data (left),  Lymphoma data (middle) and Colon data (right) 
data:
The figures show the 
Information loss (top) and symmetric explained variance (bottom) with
\math{k=2}. We can observe that our algorithms give the 
best information loss which appears to be decreasing inversely with \math{r}
as the theory predicts. Existing sparse PCA algorithms which maximize
symmetric explained variance, not surprisingly, perform better with
respect to symmetric explained variance. The figures highlight that
information loss and symmetric explained variance are quite
different metrics. We argue that information loss is the meaningful
criterion to optimize.
\label{fig:main1}}
\end{figure*}

\subsection{Algorithms}
We implemented the following variant of the sparse PCA algorithm of Theorem~\ref{theorem:SPCA}: in the general framework of the algorithm described in Section~\ref{sec:encoders}, we use the deterministic technique described in part $(i)$ in Theorem~\ref{theorem:Cmain} in order to find the matrix $\matC$ with $r$ columns of $\matX$. (This algorithm gives a constant factor approximation, as opposed to the relative error approximation of the algorithm in Theorem~\ref{theorem:SPCA}, but it is deterministic and simpler to implement.) We call this the ``Batch'' sparse linear auto-encoder algorithm. We correspondingly implement an ``Iterative'' version with fixed sparsity $r$ in each principal component. In each step of the iterative sparse auto-encoder algorithm we use the above batch algorithm to select one principal component with sparsity at most $r$. 

We compare our 
sparse auto-encoder algorithms to the following 
state-of-the-art sparse PCA algorithms:  
\begin{description}\itemsep0pt
\item[1. TPower:] This is the truncated power method for sparse PCA described in~\cite{yuan2013truncated}. 
\item[2. Gpower-$\ell_0$:] This is the generalized power method with $\ell_0$ minimization in~\cite{journee2010generalized}. 
\item[3. Gpower-$\ell_1$:] This is the generalized power method with $\ell_1$ minimization in~\cite{journee2010generalized}.
\end{description}
All those algorithms were designed to operate for the simple $k = 1$ case (notice that our algorithms handle any $k$ without any modification); hence, 
to pick \math{k} sparse components, we use the ``deflation'' method suggested in~\cite{mackey2009deflation}: let's say $\hh_1$ is the result of some method applied to 
$\matX,$ then $\hh_2$ is the result of the same method applied to $ (\matI_n - \hh_1\hh_1\transp) \cdot \matX \cdot (\matI_n - \hh_1\hh_1\transp),$ etc. 

\subsection{Environment, implementations, datasets}
We implemented our sparse linear auto-encoder algorithms in Matlab.  
For all the other algorithms, we used the matlab implementations from~\cite{yuan2013truncated}.  
The implementations of the GPower method with $\ell_1$ or $\ell_0$ minimization is from the original paper~\cite{journee2010generalized}. 
We run all the experiments in Matlab 8.4.0.150421 (R2014b) in a Macbook machine with 2.6 GHz Intel Core i7 processor and 16 GB of RAM. 

Following existing literature, we test the above algorithms in the following three datasets (all available in~\cite{yuan2013truncated}):
1) PitProps: Here, $\matX \in \R^{n \times n}$ with $n=13$ corresponds to a correlation matrix of 180 observations measured with $13$ variables. The original dataset is described in~\cite{jeffers1967two}. 
2) Colon: This is the gene-expression dataset from~\cite{alon1999broad}; here, $\matX \in \R^{500 \times 500}$.
3) Lymphoma: This is the gene-expression dataset from~\cite{alizadeh2000distinct}; here, $\matX \in \R^{500 \times 500}$. Those are PSD matrices, hence fit the sparse PCA framework we discussed above. 


\subsection{Results}
For each dataset, we tried different  $k$ and the sparsity $r$. 
The qualitative results for different \math{k} are similar so we 
only show results for \math{k=2}.
We report the results in Figure~\ref{fig:main1}.
Notice that we plot the symmetric 
explained variance and the information loss versus
 the ``average column sparsity'' of $\matH$. This is because all algorithms 
- with TPower being an exception - 
cannot guarantee column sparsity in $\matH$ of exactly $r,$ for given $r$. 
Our methods, for example, promise column sparsity at most $r$. 
The GPower methods control the sparsity level through a 
real parameter $\gamma$ which takes values in~(0,1). 
An exact relation between $\gamma$ and $r$ is not specified, 
hence we experimented with different values of $\gamma$ 
in order to achieve different levels of sparsity. 

We show example sparse encoders \math{\matH=[\hh_1,\hh_2]} 
for the 5 algorithms with \math{k=2} and 
\math{r=5} below
{
\mand{
\setlength\arraycolsep{4pt}
\begin{array}{c|c|c|c|c}
\text{\underline{\bf Batch}}&\text{\underline{\bf Iter.}}&\text{\underline{\bf TP}}&\text{\underline{\bf GP-\math{\ell_0}}}&\text{\underline{\bf GP-\math{\ell_1}}}\\
\setlength\arraycolsep{6pt}
\renewcommand\arraystretch{1}
\begin{array}{rr}
\hh_1&\hh_2\\[2pt]
            0 &           0\\
     -0.8 &    -0.3\\
            0 &           0\\
   0 &    -0.8\\
            0 &           0\\
            0 &           0\\
            0 &           0\\
      -0.3 &     0.3\\
            0 &           0\\
            0 &           0\\
            0 &           0\\
            0 &           0\\
      0.5 &    -0.4\\
\end{array}
&
\setlength\arraycolsep{6pt}
\begin{array}{rr}
\hh_1&\hh_2\\[2pt]
           0&             0\\
     -0.6  &    -0.8\\
            0&      -0.4\\
            0&      -0.2\\
            0&             0\\
            0 &            0\\
     -0.7 &     -0.1\\
            0&             0\\
     -0.3  &           0\\
     -0.1  &          0\\
            0&             0\\
            0&      -0.2\\
            0 &            0
\end{array}
&
\setlength\arraycolsep{6pt}
\begin{array}{rr}
\hh_1&\hh_2\\[2pt]
      0.5 &           0\\
      0.5 &           0\\
            0 &     0.6\\
            0 &     0.6\\
            0 &           0\\
            0 &     0.3\\
       0.4&            0\\
            0 &           0\\
      0.4 &           0\\
      0.4 &    -0.2\\
            0 &           0\\
            0 &     0.3\\
            0 &           0
\end{array}
&
\setlength\arraycolsep{8pt}
\begin{array}{rr}
\hh_1&\hh_2\\[2pt]
      0.7   &           0\\
      0.7    &          0\\
            0  &      0.7\\
            0  &      0.7\\
            0  &            0\\
            0  &            0\\
            0 &             0\\
            0 &             0\\
            0  &            0\\
            0  &            0\\
            0  &            0\\
            0  &            0\\
            0   &           0
\end{array}
&
\setlength\arraycolsep{8pt}
\begin{array}{rr}
\hh_1&\hh_2\\[2pt]
      0.6   &          0\\
      0.6   &          0\\
            0   &    0.7\\
            0  &     0.7\\
            0  &           0\\
            0   &          0\\
            0  &           0\\
            0  &           0\\
      0.5  &           0\\
            0  &           0\\
            0  &           0\\
            0   &          0\\
            0  &           0
\end{array}
\end{array}
}%
}%
The sparse encoder vectors differ significantly among the methods.
This hints at how sensitive the optimal solution is, 
which underlines why it is important to optimize the correct
loss criterion. Since one goal of low-dimensional
feature construction is to preserve as much information as possible,
the information loss is the compeling metric.

Our algorithms give better 
information loss than existing sparse PCA approaches. 
However, existing approaches have higher symmetric
explained variance. This is in general 
true across all three datasets and 
different values of $k$. 
These findings shouldn't come at a surprise, 
since previous methods aim at optimizing symmetric explained
variance and our methods choose an encoder which optimizes
information loss. 
The figures highlight once again how different the solutions can be.
It also appears that our ``iterative'' algorithm 
gives better empirical information loss compared to the batch 
algorithm, for comparable levels of average sparsity, despite having
a worse theoretical guarantee.
%
\remove{
\paragraph{Running times.} Finally, we comment on the running times of the algorithms. Our algorithms are considerably slower than existing approaches. We do not report running times here but our algorithms are on average five to ten times slower. Notice however, that they achieve considerably better reconstruction errors, hence when the running time is not a concern and one wants as small reconstruction error as possible, our methods should be given priority over existing sparse PCA algorithms.
} 

Finally, we mention that we have not attempted to optimize the running
times (theoretically or empirically) of our algorithms.
Faster versions of our proposed algorithms
might give as accurate results as the versions we have implemented 
but with considerably faster running times; for example, in the iterative 
algorithm (which calls the CSSP algorithm with \math{k=1}), 
it should be possible
significantly speed up the generic algorithm (for arbitrary \math{k}) to 
a specialized one for \math{k=1}.
We leave such implementation optimizations for future work.

\section{Discussion}

Historically, sparse PCA has meant cardinality constrained 
variance maximization. Variance \emph{per se} does not have any intrinsic
value, and it is not easy to 
generalize to arbitrary encoders which are either not orthogonal or
not decorrelated. However, the information
loss is a natural criterion to optimize because it directly
reflects how good the features are at preserving the data. Information
loss captures the machine learning goal when 
reducing the dimension: preserve as much information as possible.

We have given efficient asymptotically optimal sparse linear encoders
An interesting open question is whether one can get a 
(\math{1+\epsilon})-relative error with respect to information loss for the
iterative encoder. We believe the answer is yes as is evidenced by
the empirical performance of the iterative encoder.

\paragraph{Acknowledgments.} We thank Dimitris Papailiopoulos for pointing
out the connection between {\sc max-clique} and sparse PCA.

{\small
\bibliographystyle{named}
\bibliography{mypapers,spca,masterbib} 
}

\end{spacing}

\end{document}